\documentclass[11pt,a4paper]{article}

\usepackage{hyperref}
\usepackage[margin=1in]{geometry}
\usepackage{array}
\usepackage[
    type={CC},
    modifier={by-nc-nd},
    version={4.0},
]{doclicense}

\usepackage[utf8]{inputenc}
\usepackage[T1]{fontenc}
\usepackage{url}
\usepackage{booktabs}
\usepackage{amsmath,amsthm,amssymb,amsfonts}
\usepackage{nicefrac}
\usepackage{microtype}
\usepackage{xcolor}
\usepackage{enumitem}

\newtheorem{theorem}{Theorem}
\newtheorem{lemma}[theorem]{Lemma}
\newtheorem{proposition}[theorem]{Proposition}
\newtheorem{corollary}[theorem]{Corollary}
\theoremstyle{definition}
\newtheorem{definition}[theorem]{Definition}
\newtheorem{axiom}{Property}
\theoremstyle{remark}
\newtheorem{remark}[theorem]{Remark}

\newcommand{\Env}{\mathbf{Env}}
\newcommand{\IC}{\mathrm{IC}}

\newcommand{\UTM}{\mathcal{U}}

\title{Intervention Complexity as a Canonical Reward and a Measure of Intelligence}

\author{%
  Brendan McCane\texorpdfstring{\\}{ }
  School of Computing\texorpdfstring{\\}{ }
  University of Otago\texorpdfstring{\\}{ }
  Dunedin, New Zealand
}

\begin{document}

\maketitle

\begin{abstract}
The Legg--Hutter universal intelligence measure assesses general
intelligence as expected reward across all computable environments,
weighted by simplicity. The measure presupposes an externally specified
reward function, raising the question of whether a canonical choice
exists. We propose intervention complexity (IC) as such a choice. Given
a resource function $\rho$ encoding an inductive bias such as program
length or action count, $\rho$-IC is the minimum $\rho$-cost of a
program achieving a given state transition. We identify five natural
properties of IC that distinguish it from existing reward measures. We
propose a two-dimensional characterisation of intelligence: agent
competence and learning efficiency. A separation theorem establishes
that the choice of resource bias determines computability:
action-count IC is polynomial-time computable, while program-length IC
without oracle access is uncomputable. The gap between bare and oracle
IC quantifies the information content of learning the environment.
\end{abstract}

\section{Introduction}\label{sec:intro}

The Legg--Hutter universal intelligence measure~\cite{legg2007} defines
the intelligence of agent~$\pi$ as
\begin{equation}\label{eq:LH}
  \Upsilon(\pi) \;=\; \sum_{\mu} 2^{-K(\mu)}\, V_{\mu}^{\pi},
\end{equation}
the sum over all computable environments of expected reward, weighted by
environment simplicity (with $K(\mu)$ the Kolmogorov complexity of $\mu$).
The measure is a formalisation of intelligence as general goal
achievement, derived from the AIXI agent
model~\cite{hutter2000,hutter2005}.

The measure is precise but presupposes a reward function $V_\mu^\pi$
which is left as a free parameter. The original framework constrains
this only by boundedness and computability~\cite{legg2007}. Two identical
environments can therefore be assigned different rewards depending on
the designer's choice, and intelligence in this definition is defined
relative to a reward signal that may itself be pathological. This
indeterminacy has been recognised in the
literature~\cite{hernandez2017,chollet2019} but a canonical choice has
not been given.

We propose intervention complexity (IC) as a canonical family of rewards that is
derived from environment structure alone albeit parameterised by a preferred resource bias. The intuition is that
intelligence manifests as efficient causal intervention: given a state
$s$ and a target $s'$, an agent's task is to find a short program
producing an action sequence that effects the transition. Different
notions of ``short'' yield different measures, captured through a
resource function $\rho$ that encodes the inductive bias.

\textbf{Contributions.}
\begin{itemize}[leftmargin=*]
  \item We define intervention complexity, a canonical reward
    parameterised by resource bias, derivable from environment
    structure without external normative input
    (Section~\ref{sec:framework}).
  \item We identify five natural properties of IC that distinguish it
    from existing reward measures, with each property non-trivially
    selecting against existing measures from the literature
    (Section~\ref{sec:properties}).
  \item We propose agent competence as a Dimension~1 measure of
    intelligence, which combines task difficulty with execution
    efficiency in a way that avoids savant pathologies
    (Section~\ref{sec:dimensions}).
  \item We prove a separation theorem on the computability of IC under
    different resource biases (Section~\ref{sec:separation}). The
    bare-versus-oracle IC gap quantifies the information content of
    learning, and the action-count case shows that IC is tractable for
    finite environments.
  \item We discuss implications for superintelligence
    (Section~\ref{sec:discussion}). The framework predicts a logistic
    rather than exponential trajectory of intelligence improvement, with
    the oracle competence as a fixed ceiling. Under Solomonoff
    weighting, simple environments dominate the scalar measure, where
    human civilisation has had longest to develop accurate models.
\end{itemize}

\textbf{Related work.}
Our work extends the Legg--Hutter universal intelligence
measure~\cite{legg2007}, itself derived from Hutter's AIXI
framework~\cite{hutter2005,hutter2024}. The core question---whether the
reward primitive in Legg--Hutter can be grounded rather than
stipulated---has not, to our knowledge, been addressed before.

The closest contemporary work is Chollet's~\cite{chollet2019} definition of
intelligence as {skill-acquisition efficiency} over a scope of tasks,
accounting for priors, experience, and generalisation difficulty. Chollet's
definition is motivated by psychometrics and culminates in an empirical
benchmark (the Abstraction and Reasoning Corpus). Our Dimension~2 (learning
efficiency) formalises a similar intuition---intelligence as the rate of
learning rather than the level of task performance---but within a
framework that yields upper bounds on performance. The key differences are
that our framework 
(i)~separates the target of intelligence (Dimension~1) from
the process of acquiring it (Dimension~2), and (ii)~proves that the
choice of resource bias determines the computability of the resulting
measure.

Hern\'andez-Orallo and Dowe~\cite{hernandez2010} develop practical
universal intelligence tests grounded in algorithmic information theory and
the Solomonoff prior~\cite{solomonoff1964}. Hern\'andez-Orallo's
comprehensive treatment~\cite{hernandez2017} elaborates task difficulty as
a function of descriptional complexity---an idea closely related to our
intervention complexity. Our contribution differs in 
in that it does not propose a
specific test or difficulty measure, and does not require the use of arbitrary rewards.

Several lines of research propose intrinsic reward functions grounded in
environmental structure. Schmidhuber's formal theory of
creativity~\cite{schmidhuber2010} defines intrinsic reward as compression
progress. Friston's free energy principle~\cite{friston2010} identifies
intelligence with prediction-error minimisation. Klyubin, Polani, and
Nehaniv's empowerment~\cite{klyubin2005} defines a task-independent
intrinsic motivation as the channel capacity between an agent's actions
and its future sensor states---an information-theoretic measure of
causal influence that, like our IC, is derived entirely from the
environment's structure without external reward. The key difference is
that empowerment measures the {breadth} of an agent's causal
influence (how many distinguishable outcomes it can produce), while IC
measures the {efficiency} of directed intervention (how cheaply it
can reach a specific target). In our framework, empowerment
violates minimality (Property~\ref{ax:min}), as it depends on the full
action-to-observation channel rather than on the optimal intervention
for a specific transition. Our framework places all three
as Dimension~2 strategies (efficient heuristics for learning the world
model) rather than Dimension~1 definitions (the target of intelligence),
and the independence results of Section~\ref{sec:axioms}
make this distinction precise.

The connection between intervention efficiency and intelligence has
antecedents in Pearl's causal inference framework~\cite{pearl2009} and
in the study of planning complexity~\cite{bylander1994}. The separation
between program-length and execution-time biases connects to Bennett's
notion of logical depth~\cite{bennett1988}, which measures the
computational time required by the shortest program---an alternative
resource bias in our framework. Our formalisation
draws on algorithmic information theory~\cite{lv2008}, particularly the
invariance theorem for Kolmogorov complexity.

The link between intelligence and computational complexity has important
precursors. Simon's bounded rationality~\cite{simon1955} informally
identified computational constraints as the source of suboptimal
decision-making. Russell and Subramanian~\cite{russell1995} formalised
this as {bounded optimality}---an agent is bounded-optimal if its
program is the best solution to the constrained optimisation problem
presented by its architecture and task environment---and developed an
{asymptotic bounded optimality} (ABO) notion explicitly analogous
to big-$O$ notation in complexity theory.
Russell~\cite{russell1997} argued that bounded optimality is the right
formal definition of intelligence, connecting it directly to
computational complexity classes, however, it too relies on an external definition of a utility function (analogous to a reward function). Papadimitriou and
Yannakakis~\cite{papadimitriou1994} formalised complexity itself as a
model of bounded rationality. Our framework differs from this 
as we propose a canonical reward (IC) 
that is defined in all environments rather than taking reward as given, and we show that it has useful properties that distinguish it from previous reward functions.
The unified action-space observation of
Section~\ref{subsec:competence} on agent competence, can be seen as analogous to Russell's bounded optimality programme.

\section{Framework}\label{sec:framework}

For this work, we limit ourselves to discrete and deterministic environments, and leave continuous and stochastic environments for future work. As, such, first, fix a universal Turing machine~$\UTM$.

\begin{definition}[Computable environment]
A \emph{computable environment} is a tuple $\mu = (S_\mu, A_\mu, T_\mu)$ where
$S_\mu$ is a countable state space, $A_\mu$ a countable action set, and
$T_\mu : S_\mu \times A_\mu \to S_\mu$ a computable transition function.
We write $\Env$ for the class of computable environments. 
\end{definition}

\begin{definition}[Intervention]
An \emph{intervention} from $s$ to $s'$ in $\mu$ is a program $p$ that,
when executed on $\UTM$ outputs an action sequence $a_1, \ldots, a_n$ such that the
states reached by applying $T_\mu$ from $s$ end at $s'$. The
\emph{intervention set} $I_\mu(s, s')$ is the set of all such programs.
\end{definition}

\begin{definition}[Resource function]
A \emph{resource function} $\rho$ assigns a non-negative real cost to
each program. Examples include program length $\rho_\ell(p) = |p|$
(the inductive bias of Kolmogorov complexity); action count
$\rho_a(p) = $ length of $p$'s output (a count of physical actions);
execution time $\rho_t(p) = $ number of steps until $p$ halts (a
proxy for thinking time); and energy $\rho_e(p)$, summing
per-action energy costs. We require $\rho$ to be computable,
well-ordered, and non-trivial (it takes at least two distinct values
across $\Env$).
\end{definition}

\begin{remark}[Combined biases and Levin complexity]
The biases above can be combined. The combined bias
$\rho_{\mathrm{comb}}(p) = \alpha|p| + \beta|\mathrm{output}(p)|$ for
$\alpha, \beta > 0$ penalises both long programs (inelegant strategies)
and long action sequences (inefficient solutions). It is the
intervention analogue of Levin's $Kt$ complexity~\cite{levin1973}, which
combines program length with computation time. The combined bias is
needed to address the exhaustive-search problem: under program-length
bias alone, the shortest program may be a brute-force search that
outputs an arbitrarily long action sequence. The combined bias rules
this out by penalising the long output.
\end{remark}

\begin{definition}[Intervention complexity]\label{def:ic}
The \emph{$\rho$-intervention complexity} of the transition $s \to s'$
in environment $\mu$ is
\[
  \IC_\mu^\rho(s, s') = \min_{p \in I_\mu(s, s')} \rho(p, \mu, s),
\]
with $\IC = \infty$ if $I_\mu(s, s')$ is empty. A canonical reward is
then $R = g \circ \IC_\mu^\rho$ for some strictly increasing $g$, with
higher IC denoting greater achievement.
\end{definition}

\section{Properties of IC}\label{sec:properties}

We identify five properties of IC, each violated by other candidate
reward measures from the literature.

\textbf{P1 (Environment-derived):} $\IC_\mu^\rho(s, s')$ is computable
from $\mu, s, s'$ alone. Standard reinforcement learning rewards and
RLHF-learned preference models~\cite{christiano2017,rafailov2023} violate
this: rewards depend on external designer choice or human preference data
rather than on environment structure. Friston's free-energy
principle~\cite{friston2010} violates this in a different way, since
surprise-based reward depends on the agent's epistemic state rather than
the environment alone.

\textbf{P2 (Universal):} $\IC_\mu^\rho$ is defined for all computable
environments. Potential-based reward shaping~\cite{ng1999} violates
this: the choice of potential function $\Phi$ requires domain-specific
knowledge about which states are desirable, and there is no general
procedure assigning a meaningful $\Phi$ to every computable environment.

\textbf{P3 (Minimality):} $\IC_\mu^\rho$ depends on $I_\mu(s,s')$ only
through the minimum, discarding all suboptimal interventions.
Schmidhuber's compression progress~\cite{schmidhuber2010} violates this:
reward depends on the agent's entire learning history and full model of
the intervention set, not on the optimal intervention alone. Empowerment
likewise depends on the channel capacity between actions and states,
not on the cheapest path.

\textbf{P4 (Sensitivity):} Distinct minimum costs yield distinct IC
values. Sparse binary rewards (the standard in many RL benchmarks) and
threshold-based rewards violate this: they collapse all transitions
above some cost threshold to a single value. Constant rewards (such as
``survival bonus'' rewards) are the extreme case.

\textbf{P5 (Achievement preference):} Higher minimum cost yields higher
IC. The reward $-\IC$ violates this: it inverts the direction by
rewarding ease over achievement, a ``lazy reward'' that maximises by
doing nothing.
Achievement preference
encodes the substantive judgement that intelligence is demonstrated by
solving {harder} problems, not easier ones.

\begin{remark}
$\IC_\mu^\rho$ specifies the best any agent can do.
How close its actual cost is to the
IC minimum is captured by Dimension~2 (Section~\ref{sec:dimensions}), not
by the reward functional.
\end{remark}

There are two consequences of the above properties. The first is a direct consequence of the
non-triviality of $\rho$, there exist $\mu, s, s_1', s_2'$ with
$\IC_\mu^\rho(s, s_1') \neq \IC_\mu^\rho(s, s_2')$ and hence:
\begin{lemma}[Non-degeneracy]\label{lem:nondeg}
$\IC_\mu^\rho$ is non-degenerate:
there exist $\mu \in \Env$ and $s, s_1', s_2' \in S_\mu$
such that $\IC_\mu^\rho(s, s_1') \neq \IC_\mu^\rho(s, s_2')$.
\end{lemma}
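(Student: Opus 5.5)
The plan is to exhibit, for the given $\rho$, an explicit environment and two targets whose minimum intervention costs differ. Non-triviality alone does not do this, since it only says $\rho$ takes two values somewhere, not that they arise as minima over intervention sets. So the construction has to turn two distinct $\rho$-values into two distinct minima.

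\textbf{Step 1.} By non-triviality, fix programs $p_1, p_2$ with $\rho(p_1) \neq \rho(p_2)$. The interpretation I will use is that the relevant $\rho$-values are realised by halting programs that output action sequences, since only such programs enter intervention sets. Because $\rho$ is well-ordered, the set $V = \{\rho(p) : p \text{ halts with output}\}$ has a least element $v_0$, and by non-triviality it has some other element $v_1 > v_0$.

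\textbf{Step 2.} Build an environment that forces the minima onto these two values. Let $\mu$ have $S_\mu = \{s\} \cup \{t_w : w \in A^*\}$, a rooted tree over a finite action alphabet $A$. Set $T_\mu(t_w, a) = t_{wa}$ and identify $s$ with $t_\varepsilon$. This is computable, and every action sequence $w$ reaches the unique state $t_w$, so $I_\mu(s, t_w)$ is exactly the set of programs that output $w$.

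\textbf{Step 3.} Define $f(w) = \min\{\rho(p) : p \text{ outputs } w\}$, so that $\IC_\mu^\rho(s, t_w) = f(w)$; the minimum exists by well-ordering. It then suffices to find $w_1, w_2$ with $f(w_1) \neq f(w_2)$.

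Suppose instead that $f$ is constant, equal to some $c$. Every halting output program outputs some $w$, so $c \le \rho(p)$ for all such $p$, which forces $c = v_0$. For the concrete biases this is immediately false. Under $\rho_a$ we have $f(w) = |w|$. Under $\rho_\ell$ we have $f(w) = K(w)$, and $K$ is unbounded.

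For an abstract $\rho$ I would argue as follows. Each $w$ has a $\rho$-minimal program of cost $v_0$. If in addition only finitely many programs have cost $v_0$, this contradicts there being infinitely many $w$. That finiteness condition holds for $|p|$ and for $|\mathrm{output}(p)|$, which is the intended reading.

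\textbf{Main obstacle.} The hardest step is the last one: ruling out a $\rho$ that is non-trivial overall but whose minimum over every intervention set is the same constant. My first attempt would be to read ``non-trivial across $\Env$'' exactly as the paper's sentence before the lemma does, namely as non-triviality of the induced minima. Under that reading the lemma is immediate: take the witnesses $\mu, s, s_1', s_2'$ directly. I would also present the tree construction above as a concrete instantiation for the standard biases $\rho_a$ and $\rho_\ell$, where $s_1' = t_\varepsilon$ and $s_2' = t_{a}$ already differ, with values $0$ versus $1$ under $\rho_a$.
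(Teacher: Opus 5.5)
Your fallback, reading non-triviality as non-triviality of the induced minima, is the paper's entire proof: the paper just asserts that the witnesses exist ``as a direct consequence of the non-triviality of $\rho$''. You are right that this does not follow from the stated definition. Under that reading the lemma is essentially assumed. If it only means two distinct $\IC$ values somewhere in $\Env$, it does not even give a common $\mu$ and $s$.

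Your Step~3 also cannot be completed for a general $\rho$ when both targets are reachable. Fix a decidable family of literal-print programs $p_w$ with $\UTM(p_w)=w$, and set $\rho(p)=0$ on that family and $\rho(p)=1$ otherwise. This $\rho$ is computable, well-ordered and non-trivial. Yet every nonempty $I_\mu(s,s')$ contains some $p_w$, so $\IC$ is identically $0$ on reachable pairs. Your extra finiteness hypothesis is therefore genuinely needed for your route, and the paper does not grant it.

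That hypothesis is also misstated: it fails for $\rho_a$, since infinitely many programs output the empty sequence. What your counting argument actually uses is that programs of cost $v_0$ produce only finitely many distinct outputs.

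The idea you are missing is that the lemma as literally stated needs neither non-triviality nor your tree. Two facts suffice: $\rho$ is real-valued, and $\IC=\infty$ by convention when the intervention set is empty. Take $S_\mu=\{s,u\}$, $A_\mu=\{a\}$, $T_\mu(s,a)=s$, $T_\mu(u,a)=u$. A program printing $a$ lies in $I_\mu(s,s)$, so $\IC_\mu^\rho(s,s)<\infty$. Since $I_\mu(s,u)=\emptyset$, we have $\IC_\mu^\rho(s,u)=\infty$. This removes your ``main obstacle'' for every $\rho$ in two lines.

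Your tree construction is only relevant to the stronger claim of two distinct \emph{finite} values. By the example above, that claim is false in general. It does hold for $\rho_a$ and $\rho_\ell$, as you correctly show.
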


\begin{lemma}[Ordinal invariance]\label{lem:inv}
$\IC_\mu^\rho$
is ordinally invariant under change of UTM. Specifically, 
the ranking induced by $\IC$ is robust to the choice of UTM, with the
strength of the guarantee depending on the resource bias:
\begin{enumerate}[label=(\roman*)]
  \item For intrinsic biases whose cost depends only on the output
    (action count, energy): $\IC$ does not depend on the UTM at all, and
    the ranking is exactly preserved.
  \item For program-length bias: the invariance theorem for Kolmogorov
    complexity~\cite{lv2008} gives an additive bound
    $|\IC_{\mu,\UTM_1}^{\rho_\ell} - \IC_{\mu,\UTM_2}^{\rho_\ell}| \leq c$
    for a constant $c$ depending only on the UTM pair. The ranking is
    preserved for all pairs of transitions whose IC values differ by more
    than $2c$.
  \item For execution-time bias: the simulation theorem gives a polynomial
    bound $\IC_{\mu,\UTM_2}^{\rho_t} \leq \mathrm{poly}(\IC_{\mu,\UTM_1}^{\rho_t})$.
    The ranking is preserved for all pairs of transitions whose IC values
    differ by more than a polynomial factor.
\end{enumerate}
\end{lemma}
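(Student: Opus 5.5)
The argument splits along the three cases of the statement. In each case we compare the two UTM-dependent quantities through a translation between programs, and then derive ranking preservation by a triangle-inequality argument. Throughout, write $I_\mu^{\UTM}(s,s')$ for the intervention set relative to $\UTM$. The basic observation is that the set of \emph{action sequences} realisable by some program is the same for every UTM. Any finite sequence $a_1,\ldots,a_n$ with entries in the countable set $A_\mu$ is output by some program on any universal machine, and this class of sequences does not depend on $\UTM$. Also, whether such a sequence drives $T_\mu$ from $s$ to $s'$ depends only on $\mu$.

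For (i), suppose $\rho(p)=f(\mathrm{output}(p))$ for some cost $f$ on action sequences (for action count $f$ is the length; for energy it is the sum of per-action costs). Then
\[
\IC_{\mu,\UTM}^{\rho}(s,s') = \min\{ f(a_1\ldots a_n) : a_1\ldots a_n \text{ drives } s \text{ to } s' \},
\]
and the right-hand side contains no reference to $\UTM$. Hence both IC values are literally equal, and in particular they induce the same ranking. The minimum exists because $\rho$ is well-ordered, and the value is $\infty$ exactly when no sequence works, again independently of $\UTM$.

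For (ii), we invoke the invariance theorem~\cite{lv2008}. Let $q_{12}$ be a fixed interpreter such that $\UTM_1$ running $q_{12}p$ simulates $\UTM_2$ on $p$, and let $q_{21}$ be defined symmetrically. If $p$ is a minimal intervention on $\UTM_2$, then $q_{12}p$ outputs the same action sequence on $\UTM_1$. It is therefore an intervention of length $|p|+|q_{12}|$, which gives
\[
\IC_{\mu,\UTM_1}^{\rho_\ell}\le \IC_{\mu,\UTM_2}^{\rho_\ell}+|q_{12}|.
\]
The symmetric argument bounds the difference in the other direction, so we may take $c=\max(|q_{12}|,|q_{21}|)$. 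For ranking, suppose $\IC_{\UTM_1}(x)-\IC_{\UTM_1}(y)>2c$. Then
\[
\IC_{\UTM_2}(x)\ge \IC_{\UTM_1}(x)-c > \IC_{\UTM_1}(y)+c\ge \IC_{\UTM_2}(y),
\]
so the ordering is the same under $\UTM_2$. If both values are $\infty$, the transitions tie on both machines, consistent with the shared sequence set noted above.

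For (iii), we use the same interpreter construction. The standard simulation overhead says that $q_{21}p$ on $\UTM_2$ halts within $P(t)$ steps when $p$ halts in $t$ steps on $\UTM_1$, for a polynomial $P$ depending only on the pair of machines. This yields $\IC_{\UTM_2}^{\rho_t}\le P(\IC_{\UTM_1}^{\rho_t})$, and the symmetric polynomial $P'$ gives the reverse bound. Ranking is then preserved whenever $\IC_{\UTM_1}(x)>P'(P(\IC_{\UTM_1}(y)))$; this is the precise meaning we assign to "differ by more than a polynomial factor". The argument is the same chaining step as in (ii), run through monotone polynomials instead of additive constants.

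The main obstacle is in (iii): making the polynomial-overhead claim honest. Universal simulation overhead is polynomial, and in fact $O(t\log t)$ for multitape machines, but this depends on the machine model. The time must also include the cost of writing the output, which the simulation preserves up to the same factor. I would state the assumption explicitly as the simulation theorem for the chosen class of UTMs, and I would take $P$ to be monotone so that the chaining step goes through.
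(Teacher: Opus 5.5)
Your proof is correct and follows the same structure as the paper's proof. For (i) the IC value does not depend on the UTM at all, and for (ii) and (iii) two sufficiently separated values cannot be reordered by the bounded cross-machine overhead. You go further than the paper: you derive the additive and polynomial bounds from explicit interpreter programs instead of citing them, and you make ``differ by more than a polynomial factor'' precise as $\IC_{\UTM_1}(x) > P'(P(\IC_{\UTM_1}(y)))$, which the paper leaves informal.
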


\noindent Table~\ref{tab:independence} summarises the independence results.

\begin{table}[ht]
\centering
\small
\begin{tabular}{@{}lll@{}}
\toprule
\textbf{Property} & \textbf{Violated by} & \textbf{What is ruled out} \\
\midrule
1 (Environment-derived) & RL rewards, RLHF, surprise-based reward & External/agent-dependent input \\
2 (Universal) & Potential-based shaping & Domain dependence \\
3 (Minimality) & Intervention-set entropy, compression progress & Non-optimal features \\
4 (Sensitivity) & Sparse/binary rewards, constant reward & Collapsing distinct difficulties \\
5 (Achievement preference) & Ease-seeking reward ($R = -\IC$) & Rewarding ease over achievement \\
\bottomrule
\end{tabular}
\caption{Independence of the properties. Each row exhibits a reward
functional satisfying all other properties but violating the one listed.}
\label{tab:independence}
\end{table}

\section{Metric Structure of IC}\label{sec:noncomp}

We examine how IC behaves under composition of transitions, showing
that it is sub-additive but not additive, and that it forms a directed
quasimetric on the state space.

\begin{proposition}[Sub-additivity]\label{prop:subadd}
For all $\mu, s, s', s''$ with $s'$ reachable from $s$ and $s''$
reachable from $s'$,
\[
  \IC_\mu^\rho(s, s'') \;\leq\; \IC_\mu^\rho(s, s') + \IC_\mu^\rho(s', s'') + O(1),
\]
where the $O(1)$ constant accounts for the overhead of composing two
programs (and is zero for action-count bias).
\end{proposition}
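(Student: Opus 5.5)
The plan is a concatenation argument: take near-optimal interventions for the two legs and compose them into an intervention for the combined transition. Since $s'$ is reachable from $s$ and $s''$ from $s'$, both intervention sets $I_\mu(s,s')$ and $I_\mu(s',s'')$ are nonempty. Because $\rho$ is well-ordered, both minima in Definition~\ref{def:ic} are attained. So we fix programs $p_1 \in I_\mu(s,s')$ and $p_2 \in I_\mu(s',s'')$ with $\rho(p_1)=\IC_\mu^\rho(s,s')$ and $\rho(p_2)=\IC_\mu^\rho(s',s'')$.

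Next we build a composite program $q = \langle p_1, p_2\rangle$ that runs $p_1$ on $\UTM$, emits its action sequence $a_1,\dots,a_n$, then runs $p_2$ and emits $b_1,\dots,b_m$. Since $T_\mu$ is deterministic, applying $a_1,\dots,a_n$ from $s$ ends at $s'$, and then applying $b_1,\dots,b_m$ from $s'$ ends at $s''$. Hence $q \in I_\mu(s,s'')$, and therefore
\[
\IC_\mu^\rho(s,s'') \le \rho(q).
\]
It remains to bound $\rho(q)$ by $\rho(p_1)+\rho(p_2)+O(1)$, and this is done bias by bias.

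\begin{itemize}
\item \emph{Action count.} The output of $q$ is exactly the concatenation of the two outputs, so $\rho_a(q)=\rho_a(p_1)+\rho_a(p_2)$ with zero overhead. The same holds for energy, since it sums per-action costs.
\item \emph{Program length.} There is a fixed UTM-dependent routine that decodes a self-delimiting pair and runs both parts. Using prefix-free programs, as is standard for Kolmogorov complexity, this gives $|q| \le |p_1|+|p_2|+c_{\UTM}$.
\item \emph{Execution time.} Sequential simulation costs an additive constant plus the two running times, up to the simulation overhead of $\UTM$.
\item \emph{Combined bias.} Bounds for $\rho_{\mathrm{comb}}$ follow by linearity from the program-length and action-count cases.
\end{itemize}

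The main obstacle is that the statement is phrased for a general resource function $\rho$, yet the only stated assumptions are computability, well-ordering and non-triviality. None of these forces $\rho$ to behave subadditively under composition. I would therefore state explicitly the extra assumption being used: $\rho(\langle p_1,p_2\rangle)\le\rho(p_1)+\rho(p_2)+c_\rho$ for a fixed composition operator. I would then verify this assumption for each of the named biases, as above.

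The delicate case is program length. Plain (non-prefix) concatenation incurs an $O(\log)$ cost to mark the boundary between $p_1$ and $p_2$. To get a genuine $O(1)$ overhead, I would work with prefix-free programs, so that $p_1$ is self-delimiting and no length header is needed.
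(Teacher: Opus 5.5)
Your proposal is correct and is essentially the paper's proof: you concatenate optimal programs $p_1$ and $p_2$ for the two legs, so action count is exactly additive and program length incurs only a fixed wrapper overhead. Your added caveats---that a general $\rho$ needs an explicit composition bound, and that the $O(1)$ program-length overhead requires self-delimiting programs---are sound refinements the paper leaves implicit, though your execution-time case, as you hedge, generally gives only a simulation-overhead bound rather than a clean additive constant.
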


\begin{proposition}[Non-additivity]\label{prop:nonadd}
There is no composition operation $\oplus$ such that for all $\mu, s, s', s''$,
\[
  \IC_\mu^\rho(s, s'') = \IC_\mu^\rho(s, s') \oplus \IC_\mu^\rho(s', s'').
\]
\end{proposition}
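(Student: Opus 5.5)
The plan is to argue by contradiction. Suppose such an operation $\oplus$ exists. The equation then says that the value $\IC_\mu^\rho(s,s'')$ is a function of the pair $(\IC_\mu^\rho(s,s'),\IC_\mu^\rho(s',s''))$ alone. So it suffices to build two configurations, $(\mu_1,s,s',s'')$ and $(\mu_2,t,t',t'')$, possibly in the same environment, with the same two leg costs but different end-to-end costs. That is, we want $\IC(s,s')=\IC(t,t')=a$ and $\IC(s',s'')=\IC(t',t'')=b$, while $\IC(s,s'')\neq\IC(t,t'')$. Then $a\oplus b$ would have to equal two distinct values, which is impossible. The same argument applies to any composition law, not only additive ones.

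I will carry out the construction for action-count bias $\rho_a$ first, since that is the cleanest case, and then explain why it transfers to the other biases.

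Environment 1 is a directed line $0\to1\to2$ with a single action that advances the state (and stays at $2$). Take $s=0$, $s'=1$, $s''=2$. Each leg costs $1$ action and the composite costs $2$.

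Environment 2 has states $\{0,1,2\}$ and two actions. Action $x$ moves $0\to1$ and $1\to2$. Action $y$ moves $0\to2$ directly and moves $1\to0$. The remaining transitions are self-loops. Here each leg still costs exactly $1$, because $0\ne1$ and $1\ne2$ rule out zero actions. The composite $0\to2$ now costs $1$ via the shortcut $y$.

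So $(a,b)=(1,1)$ in both environments, while the end-to-end values are $2$ and $1$. Since $\oplus$ is required to work for all $\mu$, this contradicts its existence. Both environments are finite and obviously computable, so they lie in $\Env$.

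For general $\rho$, the same shortcut idea should work, but here we only have the abstract assumptions on $\rho$ (computable, well-ordered, non-trivial). Exact equality of the leg costs across the two environments is then the delicate point, and I expect this to be the main obstacle.

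My first attempt is to keep both configurations inside a single environment that contains two disjoint gadgets. One gadget is a line $u_0\to u_1\to u_2$ using action $x$. The other is a copy $v_0\to v_1\to v_2$ using $x$, plus an extra action $y$ that jumps $v_0\to v_2$. The leg from $u_0$ to $u_1$ and the leg from $v_0$ to $v_1$ are then achieved by the same program, namely "output $x$". The same holds for the second legs. Hence the leg costs coincide for any $\rho$ that depends only on the program and its output.

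For the composites, "output $y$" achieves $v_0\to v_2$ but does nothing useful on the $u$-gadget. Meanwhile $u_0\to u_2$ needs an output of length at least $2$. For output-sensitive biases such as $\rho_a$, $\rho_e$ and $\rho_{\mathrm{comb}}$, this gives strictly different minima.

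For pure program-length bias, $|p|$ is blind to output length, so this gadget gives no guaranteed separation. In that case I would instead scale the construction: replace the single steps by long chains and choose the shortcut so that $\IC(s,s'')$ is forced to differ. As a fallback, I would state the proposition as holding for each bias in which a witness pair can be exhibited, with the action-count case above as the primary proof.
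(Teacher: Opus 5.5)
\paragraph{Verdict: correct for $\rho_a$ only.} Your reduction---exhibit two triples with equal leg values but different composite values, so that $\oplus$ cannot be a function---is the right logical form. Your $\rho_a$ witness (legs $(1,1)$, composites $2$ and $1$) is correct. The paper argues instead from the round trip $s''=s$, where $\IC_\mu^\rho(s,s)=0$ while both legs are positive. On its own that only forces $c_1\oplus c_2=0$, which the constant-zero operation satisfies, so it implicitly needs exactly the companion triple you are constructing.

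\paragraph{The gap: general $\rho$.} The proposition is stated for an arbitrary resource function $\rho$, and you prove it only for $\rho_a$. For $\rho_\ell$ you offer only an unspecified rescaling, and your bias-independent step is wrong as stated. That one program (``output $x$'') lies in both $I_\mu(u_0,u_1)$ and $I_\mu(v_0,v_1)$ does not make the two minima equal. Since $A_\mu$ is shared, $y$ must act on the $u$-gadget too. If it is, say, a self-loop at $u_0$, then $I_\mu(u_0,u_1)$ contains programs printing $yx$ that are not in $I_\mu(v_0,v_1)$. A UTM with an unusually short program for $yx$ then separates the legs under $\rho_\ell$ or $\rho_{\mathrm{comb}}$. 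In the oracle regime the program also receives the start state, and $\rho(p,\mu,s)$ may depend on it, so the intervention sets need not coincide even when the valid outputs do. Your claimed strict separation of the composites for $\rho_e$ and $\rho_{\mathrm{comb}}$ likewise depends on the energy of $y$ or on the UTM.

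\paragraph{The missing idea.} The statement quantifies over \emph{all} triples, including unreachable ones, where $\IC=\infty$ whatever $\rho$ is. Also, a leg can be shared literally rather than matched. Take one action $a$ on states $\{0,1,2\}$ with $T_\mu(0,a)=T_\mu(1,a)=0$ and $T_\mu(2,a)=2$, and compare $(s,s',s'')=(0,1,0)$ with $(2,1,0)$. Both first legs are $\infty$, and both second legs are the same number $\IC_\mu^\rho(1,0)<\infty$. Yet $\IC_\mu^\rho(0,0)<\infty=\IC_\mu^\rho(2,0)$, so $\infty\oplus\IC_\mu^\rho(1,0)$ would need two values. This works for every $\rho$, every UTM and both access regimes, so no fallback weakening is needed.

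Some such device is unavoidable. The function $\rho(p)=0$ for $p\neq p_0$, $\rho(p_0)=1$ is computable, well-ordered and non-trivial, yet gives $\IC=0$ on every reachable pair, because nonempty intervention sets are infinite. On reachable triples $\oplus\equiv 0$ therefore works.

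\paragraph{Reachable witnesses for standard biases.} If you want these, share the first leg and compare $(a,s',a)$ with $(a,s',b)$, where $\IC_\mu^\rho(s',a)=\IC_\mu^\rho(s',b)$. For $\rho_a$, a two-way path $a\leftrightarrow s'\leftrightarrow b$ gives legs $(1,1)$ and composites $0$ versus $2$. For $\rho_\ell$, such $a,b$ exist by pigeonhole in a large complete digraph with one action per target state. Its $\IC$ values are $O(\log n)$ integers, while only $O(1)$ targets $b$ satisfy $\IC_\mu^{\rho_\ell}(a,b)\le\IC_\mu^{\rho_\ell}(a,a)$.
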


\begin{remark}[Directed quasimetric]
IC satisfies three of the four metric axioms: non-negativity
($\IC \geq 0$), identity ($\IC(s,s) = 0$), and the triangle inequality
(Proposition~\ref{prop:subadd}). It violates symmetry:
$\IC(s,s') \neq \IC(s',s)$ in general, as the non-additivity proof
illustrates (the return trip may be much harder than the outward
journey). IC is therefore a {directed quasimetric} on the state
space---a natural structure for environments with irreversible
transitions.

The combination of sub-additivity and non-additivity 
means the direct route from $s$ to $s''$ can be strictly cheaper than
any decomposition through intermediate states (non-additivity), but it
can never be more expensive (sub-additivity). An agent that discovers
direct shortcuts is more intelligent than one that routes through
intermediate states, and the $\IC$ correctly credits this.
\end{remark}

\section{Two Dimensions of Intelligence}\label{sec:dimensions}

The canonical reward characterises Dimension~1 (agent competence): how
well a fixed agent performs relative to the oracle benchmark. A real
agent must learn $T_\mu$ through experience, so its competence will
improve over time. The rate of improvement is Dimension~2 (learning
efficiency).

\subsection{Agent competence}\label{subsec:competence}

For an agent $\pi$ in environment $\mu$, let $C_\pi(\mu, s, s')$ be the
$\rho$-cost of the intervention $\pi$ produces, with $C_\pi = \infty$
if $\pi$ fails. A naive scalar measure suffers from a savant pathology:
an agent that solves one extremely hard task but fails all easy ones
would score highly under any difficulty-weighted scalar despite being
almost useless in practice. We therefore define competence as a function of
task difficulty.

\begin{definition}[Competence curve]
For environment $\mu$ and difficulty level $k \geq 0$, let
$R_\mu^k = \{(s,s') \in R_\mu : \IC_\mu^\rho(s,s') \leq k\}$ be the
reachable pairs with oracle competence at most $k$. The
\emph{competence curve} of agent $\pi$ at level $k$ in $\mu$ is
\[
  \Gamma_\rho(\pi, k, \mu)
  = \begin{cases}
    \displaystyle\frac{1}{|R_\mu^k|}
    \sum_{(s,s') \in R_\mu^k}
    \frac{\IC_\mu^\rho(s,s')^2}{C_\pi(\mu, s, s')}
    & \text{if } R_\mu^k \neq \emptyset, \\
    0 & \text{otherwise,}
  \end{cases}
\]
where $\IC^2/C = 0$ when $C = \infty$. Aggregated over environments:
$\Gamma_\rho(\pi, k) = \sum_\mu 2^{-K(\mu)} \Gamma_\rho(\pi, k, \mu)$.
\end{definition}

The per-task contribution $\IC^2/C = \IC \cdot (\IC/C)$ combines task
difficulty $\IC$ with efficiency $\IC/C$. An optimal agent ($C = \IC$)
contributes $\IC$, a wasteful agent contributes proportionally less,
and a failed task contributes zero. 

\begin{remark}[Connection to bounded optimality]
The competence curve $\Gamma_\rho(\pi, k)$ provides a concrete
instantiation of the bounded-optimality ideas of Simon~\cite{simon1955}
and Russell and Subramanian~\cite{russell1995,russell1997}. An agent's
curve traces out its effective capability as a function of task
difficulty where a sharp drop at some $k^*$ reveals the difficulty level
beyond which the agent's resources are insufficient, while a curve
that remains high to large $k$ indicates an agent with greater
effective resources. Russell's asymptotic bounded optimality
classifies agents by how performance scales with computational
budget; the competence curve is the analogous classification by how
competence scales with task difficulty. Two agents with identical
architectures but different competence curves have effectively
different bounded-optimal frontiers, since the curve reflects what
the agent achieves rather than what its architecture
nominally permits.
\end{remark}

\begin{definition}[Scalar agent competence]\label{def:scalar-comp}
When a single number is needed, define the
{$\rho$-agent competence} of $\pi$ as the Solomonoff-weighted
integral of the competence curve:
\begin{equation}\label{eq:agent-comp}
  \Gamma_\rho(\pi) \;=\;
  \int_{0}^{\infty} 2^{-k}\; \Gamma_\rho(\pi, k)\; dk.
\end{equation}
\end{definition}

The $2^{-k}$ weighting over difficulty levels plays the same role as the
$2^{-K(\mu)}$ weighting over environments where simpler tasks count more.
An agent that is competent on easy-to-moderate tasks scores well; an agent
that handles only exotic hard tasks scores poorly unless it also handles
the easy ones. This avoids the ``savant pathology'' in which solving one
extremely hard task outweighs failing everything else.

\begin{remark}[Connection to Legg--Hutter]
Scalar agent competence~\eqref{eq:agent-comp} can be understood as a
principled completion of the Legg--Hutter universal intelligence
measure~\eqref{eq:LH}. It replaces the externally specified reward
$V_\mu^\pi$ with the
difficulty-weighted efficiency of the agent's interventions, aggregated
over tasks and environments. The result depends only on:
\begin{enumerate}[label=(\roman*)]
  \item the computational structure of the environments (via $\Env$ and $K(\mu)$),
  \item the agent's performance (via $C_\pi$), and
  \item the choice of resource bias $\rho$.
\end{enumerate}
\end{remark}

\subsection{Learning efficiency}\label{subsec:learning}

Consider an oracle agent $\pi^*$ that knows $\mu$ completely and always
achieves $\IC_\mu^\rho(s, s')$ for every transition task it is given. This
agent has maximal Dimension~1 competence.
A resource-limited agent $\pi$ must learn about $\mu$ through experience.
We model this as a sequence of {tasks} where at each time step $t$, the
agent is presented with a transition task $(s_t, s_t^*)$.
The agent produces
a program $p_t$ based on its accumulated experience, and this program
outputs an action sequence that (if successful) transitions the
environment from $s_t$ to $s_t^*$. The $\rho$-cost of this intervention
is $\rho(p_t, \mu, s_t) \geq \IC_\mu^\rho(s_t, s_t^*)$.

In the course of executing $p_t$, the agent observes a sequence of
intermediate states, each revealing information about $T_\mu$. These
observations accumulate into the agent's experience:
\[
  H_t = H_{t-1} \;\cup\; \{\text{state transitions observed while
  executing task } t\}.
\]
This experience informs the agent's interventions on subsequent tasks.

\begin{definition}[Instantaneous regret]
The \emph{instantaneous regret} at task $t$ is
\[
  \Delta_t(\pi, \mu) \;=\; \rho(p_t, \mu, s_t) - \IC_\mu^\rho(s_t, s_t^*),
\]
measuring the excess cost of $\pi$'s intervention relative to the oracle
on this specific task.
\end{definition}

\begin{definition}[Cumulative regret]
The \emph{cumulative regret} of $\pi$ in $\mu$ up to task $T$ is
\[
  \mathrm{Regret}_T(\pi, \mu) \;=\; \sum_{t=1}^{T} \Delta_t(\pi, \mu).
\]
\end{definition}

The asymptotic growth rate of $\mathrm{Regret}_T$ captures learning
efficiency: $O(1)$ indicates eventual perfect learning; $O(\log T)$ is the
classical good-learner rate; $O(\sqrt{T})$ is typical of adversarial
settings; and $O(T)$ indicates failure to learn. Note that the regret is
measured relative to $\IC$ not relative to the agent's previous performance. 

The regret framework requires a specification of how transition tasks
$(s, s^*)$ are presented to the agent. We identify three natural choices,
each capturing a different aspect of learning.
\begin{enumerate}
\item \label{eval-A}\emph{Self-directed exploration.}
The agent selects its own targets. Regret is measured against whatever
transitions the agent attempts. This rewards agents that are strategically
curious and choose transitions that maximise information gain about the
environment's structure. Under this evaluation, agents employing strategies
such as surprise minimisation~\cite{friston2010} or compression
progress~\cite{schmidhuber2010} may achieve low regret, as these heuristics
are plausibly efficient strategies for building causal models.
\item \label{eval-B}\emph{Adversarial task sequence.}
Targets are chosen by an adversary to maximise cumulative regret. This
measures worst-case robustness.
This connects
to adversarial online learning and gives a minimax characterisation of
learning efficiency.

\item \label{eval-C}\emph{Average over all tasks.}
After $T$ steps of experience, the agent's generalisation is evaluated
by averaging the regret over all reachable transitions:
\[
  G_T(\pi, \mu) \;=\; \frac{1}{|S_\mu|^2}
  \sum_{s, s' \in S_\mu}
  \left[\rho(p_\pi(s, s', H_T), \mu, s) - \IC_\mu^\rho(s, s')\right],
\]
where $p_\pi(s, s', H_T)$ is the intervention $\pi$ produces for the
$(s, s')$ transition given experience $H_T$. This most directly measures
the quality of the agent's learned causal model, since it tests
generalisation to transitions the agent may not have encountered.
\end{enumerate}

These three evaluations are complementary rather than competing. 
Evaluation \ref{eval-A} measures exploration intelligence; Evaluation \ref{eval-B} measures robustness; and Evaluation \ref{eval-C} measures model quality.

\subsection{Dimension~1 properties do not constrain Dimension~2}

An important structural observation is that the properties of
Section~\ref{sec:axioms}, particularly minimality
(Property~\ref{ax:min}), apply to Dimension~1 and have no analogue in
Dimension~2. Minimality constrains the {reward functional} so that 
only the optimal intervention matters.
Dimension~2 evaluates a {learning process}, which inherently depends on
the full structure of the intervention set and the agent's interaction
history.

This separation has a consequence for existing theories.
Schmidhuber's compression progress and Friston's free energy minimisation
both violate minimality when proposed as {definitions of the goal}
of intelligence (Dimension~1), because they depend on the agent's learning
trajectory rather than on the optimal intervention. However, they may be
excellent {strategies for learning} (Dimension~2) that
help an agent close the gap to $\IC$ efficiently. The framework 
places these theories as accounts of the engine of learning rather than
definitions of the target of intelligence.

\subsection{The universal learning efficiency measure}

Aggregating across environments with the Solomonoff prior, define the
\emph{$\rho$-learning efficiency} of agent $\pi$ under evaluation
scheme $X \in \{A, B, C\}$ as:
\begin{equation}\label{eq:learning-eff}
  L_\rho^X(\pi) \;=\; \sum_{\mu} 2^{-K(\mu)} \cdot \ell^X(\pi, \mu),
\end{equation}
where $\ell^X(\pi, \mu)$ is a summary of $\pi$'s regret curve in $\mu$
under evaluation $X$ (e.g., the negative discounted cumulative regret
$-\sum_{t=1}^{\infty} \gamma^t \Delta_t$, or the negative asymptotic
regret rate).

The weighting by $2^{-K(\mu)}$ means that simple environments contribute
more to the learning efficiency score. This provides a principled escape
from the No Free Lunch theorems~\cite{wolpert1997}: while no learner can
outperform all others across {all} possible environments, a learner
biased toward simplicity (like a Solomonoff predictor) achieves low regret
in the environments that are weighted most heavily. The same inductive bias
that makes Dimension~1 tractable also makes Dimension~2 tractable.

\subsection{The complete two-dimensional characterisation}

The full intelligence of agent $\pi$ is characterised by the pair
\[
  \mathcal{I}_\rho(\pi) \;=\;
  (\Gamma_\rho(\pi, \cdot),\; L_\rho^X(\pi)),
\]
where $\Gamma_\rho(\pi, \cdot)$ is the competence curve
(Section~\ref{subsec:competence}), providing a full profile of the agent's
capabilities as a function of task difficulty, and $L_\rho^X$ measures
how quickly this competence improves with experience. When a scalar
summary of Dimension~1 is needed, the Solomonoff-weighted integral
$\Gamma_\rho(\pi)$ (Definition~\ref{def:scalar-comp}) serves as a
principled compression.

For the oracle, $L_\rho^X$ is maximal (zero regret from the start) and
the competence curve achieves its maximum at every difficulty level. For
real agents, there is a natural tradeoff: a simpler agent may learn quickly
(high $L_\rho^X$) but plateau with a low competence curve,
while a more complex agent may learn slowly but achieve higher competence
eventually.

\section{Separation of Resource Biases}\label{sec:separation}

We distinguish two access regimes for intervention programs. In the
\emph{oracle} regime, programs may query $T_\mu$ during execution; this
models a reasoning agent that already has a causal model of the
environment. In the \emph{bare} regime, programs receive no such access;
the transition table must be encoded within the program. The bare
regime models an agent that must contain its world model within itself.

\begin{theorem}[Action-count IC is polynomial-time computable]
\label{thm:ac-compute}
For finite $\mu$ with $|S_\mu| = n$ and $|A_\mu| = m$, and any
$s, s' \in S_\mu$, $\IC_\mu^{\rho_a}(s, s')$ is computable in time
$O(nm)$ via breadth-first search on the state graph. The result is
identical for oracle and bare regimes.
\end{theorem}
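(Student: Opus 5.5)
The plan is to reduce $\IC_\mu^{\rho_a}(s,s')$ to a shortest-path problem on the unweighted directed \emph{state graph} $G_\mu$. Its vertex set is $S_\mu$, and for each $u \in S_\mu$ and $a \in A_\mu$ it has an edge $u \to T_\mu(u,a)$ labelled $a$. So $G_\mu$ has $n$ vertices and exactly $nm$ edges. We then read off the IC value from a breadth-first search.

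\textbf{Reduction to shortest paths.} Since $\rho_a(p)$ is the length of $p$'s output, each intervention $p \in I_\mu(s,s')$ with output $a_1,\ldots,a_k$ has cost $k$. That output traces a walk of length $k$ from $s$ to $s'$ in $G_\mu$. Conversely, take any walk from $s$ to $s'$ with labels $a_1,\ldots,a_k$. The program that literally prints $a_1,\ldots,a_k$ and halts is an intervention of cost $k$, and it exists for any fixed UTM $\UTM$. Hence $\IC_\mu^{\rho_a}(s,s') = d_{G_\mu}(s,s')$, the length of a shortest directed path, with $\infty$ exactly when $s'$ is unreachable, matching the convention $\IC=\infty$ for empty $I_\mu$. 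The case $s=s'$ is covered by the empty output, giving $0$.

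\textbf{Algorithm and complexity.} Run BFS from $s$. For each dequeued state $u$, evaluate $T_\mu(u,a)$ for all $m$ actions, so each edge is examined at most once. Return the BFS level of $s'$, or $\infty$ if $s'$ is never reached. Correctness is the standard BFS layering invariant on unweighted graphs. The running time is $O(n+nm)=O(nm)$, assuming $T_\mu$ is given as a table or each evaluation of $T_\mu$ costs $O(1)$. I would state this assumption explicitly, since for a finite environment the transition table can be precomputed. Without it, the bound becomes $O(nm)$ calls to $T_\mu$.

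\textbf{Oracle versus bare.} The regimes differ only in whether a program may query $T_\mu$. The cost $\rho_a$, however, depends only on the output sequence. In both regimes the literal print program above is admissible, because it needs no queries. So both regimes attain $d_{G_\mu}(s,s')$. No program in either regime can beat it, since any successful output is a walk in $G_\mu$. The minima therefore coincide, consistent with Lemma~\ref{lem:inv}(i).

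\textbf{Main obstacle.} The only delicate point is the converse direction of the reduction: every walk must be realisable by an actual halting program in $I_\mu(s,s')$. That requires the UTM to be able to print any finite string over $A_\mu$, which holds for a countable action set encoded in $\UTM$'s output alphabet. The rest of the argument is routine.
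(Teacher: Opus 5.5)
Your proposal is correct and follows the paper's own argument. The paper first shows (Proposition~\ref{prop:ac-oracle}) that $\IC_\mu^{\rho_a,\mathrm{orc}}(s,s') = \IC_\mu^{\rho_a,\mathrm{bare}}(s,s') = d_{G_\mu}(s,s')$, because $\rho_a$ depends only on the output, and then runs BFS on the state graph with $n$ vertices and $nm$ edges in $O(n+nm)=O(nm)$ time; your literal-print program and your explicit $O(1)$-per-evaluation assumption on $T_\mu$ simply make precise steps the paper leaves implicit.
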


In the action-count case the cost depends only on output length, which is
the same whether the program queries the oracle or computes from a
hardcoded table.

\begin{proposition}[Program-length IC with oracle is bounded]
\label{prop:oracle-bounded}
For finite $\mu$ with $|S_\mu| = n$,
$\IC_\mu^{\rho_\ell, \mathrm{orc}}(s, s') \leq \lceil \log_2 n \rceil + O(1)$
for any reachable $(s, s')$. A fixed-length wrapper performs BFS using
the oracle, encoding only the target state in $\lceil \log_2 n \rceil$
bits.
\end{proposition}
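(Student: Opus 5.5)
The plan is to prove the bound by exhibiting one explicit oracle intervention for each reachable pair $(s,s')$ and bounding its length. By Definition~\ref{def:ic}, $\IC_\mu^{\rho_\ell,\mathrm{orc}}(s,s')$ is a minimum over $I_\mu(s,s')$, so any single member of the intervention set gives an upper bound. The candidate is $p = w\cdot\langle s'\rangle$. Here $w$ is a fixed wrapper program, independent of $\mu$, $s$, $s'$ and $n$, and $\langle s'\rangle$ is a binary encoding of the target state. Since $\rho_\ell(p) = |w| + |\langle s'\rangle|$, it suffices to take $|w| = O(1)$ and $|\langle s'\rangle| = \lceil\log_2 n\rceil$.

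First I fix what the oracle gives the program. In the oracle regime the program may query $T_\mu$ during execution, and I assume it also learns the start state $s$, which is the argument $\rho(p,\mu,s)$ depends on. I also assume it can enumerate $S_\mu$ and $A_\mu$, for instance as indices $0,\dots,n-1$ and $0,\dots,m-1$ together with $n$ and $m$. States are then identified by index, so $s'$ takes exactly $\lceil\log_2 n\rceil$ bits.

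Second, the wrapper $w$ does the following:
\begin{enumerate}
\item It reads the index of $s'$.
\item It runs breadth-first search from $s$, calling $T_\mu(u,a)$ for every visited state $u$ and every $a\in A_\mu$, and stores parent pointers.
\item It stops once $s'$ is dequeued.
\item It backtracks to recover the action sequence $a_1,\dots,a_k$ and outputs it.
\end{enumerate}
Because $(s,s')$ is reachable and $\mu$ is finite, the search halts and the output drives $s$ to $s'$, so $p\in I_\mu(s,s')$. The code of $w$ never mentions $n$, $m$ or $\mu$ explicitly, since all of these come through the oracle interface, so $|w|$ is a constant depending only on $\UTM$.

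The main obstacle is the encoding detail: concatenating $w$ with a $\lceil\log_2 n\rceil$-bit string requires the wrapper to know where the string ends. I would settle this in one of two ways. One option is to have $w$ obtain $n$ from the oracle and then read exactly $\lceil\log_2 n\rceil$ bits, which adds no length. The other is to work with a prefix-free UTM and absorb the parsing into the $O(1)$ term. If the oracle does not supply $n$, a self-delimiting code would cost an extra $O(\log\log n)$, and I would flag that as the price of a weaker oracle model. Under the stated model, the two length bounds combine to give $\IC_\mu^{\rho_\ell,\mathrm{orc}}(s,s')\le\lceil\log_2 n\rceil+O(1)$.
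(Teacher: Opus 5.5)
Your proposal is correct and follows the paper's proof: a fixed BFS wrapper that queries $T_\mu$, concatenated with a $\lceil\log_2 n\rceil$-bit index of $s'$. You also handle where that index ends, a detail the paper leaves implicit, and your first option is justified because the oracle definition $\UTM(p,\mu,s)$ gives the program $\mu$ (hence $n$) and $s$.

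One small correction to your fallback: the end of the appended string comes for free on a plain machine, not a prefix-free one. On a prefix-free machine you need $n$ or must pay the extra $O(\log\log n)$ you mention, so rely on the first option.
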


The oracle regime under program-length bias is also tractable as with a
causal model in hand, planning costs are dominated by the
$O(\log n)$ bits needed to specify the target state. 

\begin{theorem}[Bare program-length IC is uncomputable]
\label{thm:bare-uncomp}
There is no algorithm that, given any finite environment $\mu$ and
states $s, s' \in S_\mu$, computes
$\IC_\mu^{\rho_\ell,\mathrm{bare}}(s,s')$.
\end{theorem}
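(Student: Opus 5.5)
Our plan is to reduce from the uncomputability of Kolmogorov complexity $K(x)$ relative to the fixed machine $\UTM$, which is standard~\cite{lv2008}. Suppose, for contradiction, that an algorithm $\mathcal{A}$ computes $\IC_\mu^{\rho_\ell,\mathrm{bare}}(s,s')$ on every finite $\mu$. We will build, from an arbitrary binary string $x$, a finite environment $\mu_x$ with designated states $s_0, s_x$. The construction must make the bare program-length IC of $s_0 \to s_x$ equal to $K(x)$ up to an additive constant independent of $x$. Given that, $\mathcal{A}$ would yield a computable function $f$ with $|f(x) - K(x)| \le c$. This already contradicts standard results: no computable function approximates $K$ within an additive constant (indeed no unbounded computable lower bound on $K$ exists). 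For a cleaner contradiction, we will arrange the construction so that the IC equals $K(x)$ exactly plus a known constant.

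\textbf{Construction.} Let $|x| = n$. Take $A = \{0,1\}$ and let the states be all binary strings of length at most $n$, plus an absorbing sink. Set $T(w,b) = wb$ for $|w| < n$, and send length-$n$ strings to the sink. Put $s_0 = \varepsilon$ and $s_x = x$. Then every action sequence from $s_0$ reaching $s_x$ is exactly $x$ itself. So $I_{\mu_x}(s_0, s_x)$ is precisely the set of programs printing $x$, and $\IC^{\rho_\ell,\mathrm{bare}} = \min\{|p| : \UTM(p) = x\} = K(x)$ (plain complexity relative to $\UTM$). The bare regime matters here: programs cannot consult $T_\mu$, so they gain no information from the environment description. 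Under the oracle regime, Proposition~\ref{prop:oracle-bounded} shows the cost collapses to $O(\log n)$. The map $x \mapsto (\mu_x, s_0, s_x)$ is computable, since the transition table is finite and explicit. Hence $x \mapsto \mathcal{A}(\mu_x, s_0, s_x) = K(x)$ would be computable.

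\textbf{Contradiction.} We then run the Berry-paradox argument. Given $m$, enumerate strings $x$ in order and output the first with $K(x) \ge m$, using the supposed computable $K$. This gives a program of length $\log m + O(1)$ that prints a string of complexity at least $m$, which is impossible for large $m$.

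\textbf{Main obstacle.} The delicate point is making sure the definition of intervention matches plain $K$ exactly. If programs were allowed to output action sequences that reach $s_x$ along other paths, or to output prefixes, the IC could drop below $K(x)$. The tree construction with a non-returning sink rules this out, since the only successful output is $x$ itself. A second subtlety is whether ``bare'' programs receive $s$ or $\mu$ as input. If they receive $\mu$, the environment description leaks $x$. To block this, we would instead encode $x$ only implicitly, for instance by using a uniform environment family where the target state is specified by index and the environment is independent of $x$. If needed, we will check the exact convention the paper's bare regime uses and adjust the construction.
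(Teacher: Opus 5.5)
Your proof is correct and takes the same route as the paper: a computable reduction $x \mapsto \mu_x$ in which the only successful action sequences spell out $x$. A procedure computing bare $\IC$ would then compute $K$, contradicting its uncomputability. The only difference is the gadget: you use a depth-$n$ binary tree with a non-absorbing leaf target, while the paper uses a gated corridor $s_0,\dots,s_n$ ending in an absorbing state $s_f$.

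Your choice is actually the more careful one. In the paper's corridor every output $x a w$ (with $a \in \{0,1\}$ and $w$ arbitrary) is valid, so the paper's lower bound $|p| \ge K(x)$ only holds up to an additive $O(\log n)$ term, namely the cost of specifying $n$ to truncate the output. That is still enough for a Berry-type contradiction, but your construction gives $\IC_{\mu_x}^{\rho_\ell,\mathrm{bare}}(s_0,s_x) = K(x)$ exactly. Your hedge about programs receiving input is unnecessary, since the paper's bare programs receive none.
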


\begin{corollary}[Knowledge cost]\label{cor:knowledge}
The gap $\mathcal{K}_\mu(s,s') = \IC_\mu^{\rho_\ell,\mathrm{bare}}(s,s')
- \IC_\mu^{\rho_\ell,\mathrm{orc}}(s,s')$ is non-negative for all
$\mu, s, s'$, bounded above by $K(\mu) + O(1)$, and unbounded across
environments. It quantifies the information content of learning $T_\mu$
to plan the $s \to s'$ transition.
\end{corollary}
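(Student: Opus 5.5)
The corollary makes three claims: non-negativity, the upper bound $K(\mu)+O(1)$, and unboundedness across environments. I would prove each by a simulation argument between the two regimes.

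\textbf{Non-negativity.} A bare program is, in particular, an oracle-regime program that never queries $T_\mu$. So every bare intervention in $I_\mu(s,s')$ is also an oracle intervention of the same length. Minimising over the larger set gives $\IC_\mu^{\rho_\ell,\mathrm{orc}}(s,s') \le \IC_\mu^{\rho_\ell,\mathrm{bare}}(s,s')$, hence $\mathcal{K}_\mu(s,s')\ge 0$. I would state explicitly the convention that the two regimes share the same UTM, so that program lengths are comparable.

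\textbf{Upper bound.} Let $q$ be an optimal oracle intervention, so $|q| = \IC_\mu^{\rho_\ell,\mathrm{orc}}(s,s')$. Let $d$ be a shortest self-delimiting description of $\mu$, so $|d| = K(\mu)$. Build a bare program from a fixed interpreter $W$ applied to the pair $(d,q)$. The interpreter first decodes $\mu$'s transition table from $d$. It then runs $q$, answering each oracle query by computing $T_\mu$ from that table.

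The bare program outputs exactly the action sequence that $q$ outputs, so it is a bare intervention. Its length is $K(\mu) + |q| + |W|$. Using a self-delimiting $d$ keeps the pairing overhead constant rather than logarithmic. This yields $\mathcal{K}_\mu(s,s')\le K(\mu)+O(1)$, with the constant depending only on $\UTM$.

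\textbf{Unboundedness.} I would combine Proposition~\ref{prop:oracle-bounded} with an incompressibility argument. Fix $n$ states and $m=2$ actions, with start state $s$.

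\begin{itemize}
\item Consider the family of environments in which the only path from $s$ to a designated target $s'$ is a specific action string $w\in\{0,1\}^L$. This can be realised on a chain of $L+1$ states in which each wrong action resets to $s$.
\item Any bare intervention outputs a sequence that traverses this path, so $w$ is computable from the program plus $O(\log L)$ bits of parsing information. Hence $\IC^{\mathrm{bare}} \ge K(w) - O(\log L)$.
\item Choosing $w$ incompressible gives $\IC^{\mathrm{bare}} \ge L - O(\log L)$.
\item Meanwhile Proposition~\ref{prop:oracle-bounded} gives $\IC^{\mathrm{orc}} \le \lceil\log_2(L+1)\rceil + O(1)$.
\end{itemize}

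Therefore $\mathcal{K}_\mu(s,s') \ge L - O(\log L)$, which tends to infinity as $L$ grows.

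The main obstacle is the extraction step: showing that $w$ is recoverable from the bare program's output alone. The output is an action sequence that may take detours, so recovering $w$ needs care. With the reset design, the output is some string that ends with $w$ and only reaches $s'$ there, so $w$ is its final $L$ actions. Recovering it therefore needs only $L$ encoded, costing $O(\log L)$ bits.

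Finally, the interpretive claim that the gap quantifies the information content of learning $T_\mu$ follows from the upper bound. The gap never exceeds the cost of describing $\mu$ itself, and in the incompressible family it nearly attains $K(\mu)$, since $K(\mu) \approx L$ there.
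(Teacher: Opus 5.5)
Your non-negativity and upper-bound arguments follow the paper's proof of Proposition~\ref{prop:knowledge-cost}: a bare program is an oracle program that ignores the oracle, and the bare program hardcodes $T_\mu$ and simulates an optimal oracle program. Using a self-delimiting description of $\mu$ is a useful tightening.

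For unboundedness you take a more careful route than the paper. The paper only notes that oracle IC is at most $\log|S_\mu|+O(1)$ while bare IC ``can be arbitrarily large'' as $K(\mu)$ grows. That leaves two things unargued: that large $K(\mu)$ forces large bare IC for some particular transition, and that bare IC outgrows the oracle bound, which itself grows with $|S_\mu|$. Your incompressible corridor settles both quantitatively, giving $L-O(\log L)$ against $\log_2(L+1)+O(1)$. It is in effect the gated corridor of Theorem~\ref{thm:bare-uncomp} reused.

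You must, however, specify the transitions out of $s'$. If $s'$ is absorbing, a program depending only on $L$ can take each $v\in\{0,1\}^L$ in turn, simulate the reset automaton for $v$ on its output so far, and steer it into $s'$. That output is valid for every $w$, so bare IC is at most $K(L)+O(1)$ and your lower bound fails. Your claim that every valid output ends with $w$ needs $s'$ to have no self-loop, e.g.\ send both actions from $s'$ back to $s$. Alternatively, use the paper's fail-sink corridor, where $w$ is a prefix of every valid output.

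There is a genuine gap in the upper bound, and the paper's proof has it too. An oracle program is run as $\UTM(q,\mu,s)$, so it receives the start state $s$ as input. Your interpreter $W$ recovers $\mu$ from $d$ but has nothing from which to supply $s$, so it cannot run $q$. Hardcoding $s$ costs $K(s\mid\mu)$, up to $\lceil\log_2|S_\mu|\rceil$ bits, which is not $O(1)$.

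In fact the bound $K(\mu)+O(1)$ is false as literally stated. Take $n=2^k$ states $0,\dots,n-1$ plus two absorbing states $f$ and $z$, with actions $\{0,\dots,n-1\}$. Let action $a$ at state $i$ lead to $f$ if $a=i$ and to $z$ otherwise. The oracle program ``output the input state as an action'' has constant length. Every valid bare output from $s$ to $f$ must begin with the action $s$, so bare IC is at least $K(s)-O(1)$, and some $s<2^k$ has $K(s)\ge k$. Meanwhile $K(\mu)\le K(k)+O(1)=O(\log k)$, so the gap exceeds $K(\mu)+c$ for any fixed $c$.

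What your construction (and the paper's) actually proves is $\mathcal{K}_\mu(s,s')\le K(\mu)+K(s\mid\mu)+O(1)\le K(\mu)+\lceil\log_2|S_\mu|\rceil+O(1)$. Your unboundedness argument does not use the upper bound and stands. Your closing remark that the gap nearly attains $K(\mu)$ also survives, since in your family $K(\mu)\approx L$ dominates the extra $\log_2(L+1)$.
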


The separation theorem has significance as the choice of
resource bias determines the computational status of intelligence.
Action-count bias makes intelligence algorithmically
tractable, with oracle access providing no advantage. Program-length
bias with oracle access makes planning cheap once the world model is
acquired, dominated by target-specification cost. Program-length bias
without oracle access makes intelligence a fundamentally uncomputable
target that no agent can provably reach. The bare/oracle gap formalises
what we mean by Dimension~2 content under program-length bias:
$\mathcal{K}_\mu$ bits of environmental knowledge that must be acquired
to plan effectively. Under action-count bias, this gap is identically
zero, so the Dimension~1/Dimension~2 distinction collapses.

\begin{remark}[Interpretation]
The knowledge cost $\mathcal{K}_\mu^{\rho_\ell}(s,s')$ measures the number
of bits of environmental knowledge an agent needs to plan the $(s,s')$
transition. It is bounded above by the total information content of the
environment ($K(\mu)$), but may be much smaller for specific transitions
that depend on only a fragment of the transition structure.

Under action-count bias, the knowledge cost is zero. This means that if an agent can learn the environment sufficiently accurately, then it can behave optimally in that environment. Learning is reduced to learning the environment.
Under program-length bias, optimal behaviour is uncomputable. An agent under such a bias must learn both the environment, and develop heuristics to shorten the programs they produce.
\end{remark}

\section{Discussion}\label{sec:discussion}

\subsection{Implications for pre-training}
The separation theorems clarify why pre-training-then-prompting works.
Pre-training is the Dimension~2 process of acquiring $T_\mu$, paying
the bare-regime cost up front. Prompting then operates in the oracle
regime, paying only $O(\log|S|)$ per task. The framework predicts that
prompt engineering should be the bottleneck once a sufficiently
general model is pre-trained, which matches current LLM practice.

Under action-count bias, learning the transition function $T_\mu$ (at most
$|S| \times |A|$ single-step observations) suffices to compute all IC values
exactly in polynomial time. The pre-training pipeline is fully computable.
Under program-length bias, the same observations give the agent complete
knowledge of $T_\mu$, reducing its situation to the oracle case. 
A bottom-up curriculum that learns single-action transitions first, then
composes actions is information-theoretically well-motivated for both biases.
Single-step transitions determine $T_\mu$ completely, from which all
multi-step IC values are either computable (action-count) or approximable
from above (program-length).

\subsection{Implications for superintelligence}

The competence curve has a ceiling, the oracle curve, which is fixed
by the environment. Self-improvement is therefore logistic rather than
exponential: an agent can approach the oracle but cannot exceed it.
The framework thus identifies three ceilings on intelligence. The
\emph{oracle ceiling} is set by the environment's structure
$\IC_\mu^\rho$. The \emph{computational ceiling} is set by complexity
classes: NP-hard environments cannot be solved in polynomial total
steps unless $\mathbf{P} = \mathbf{NP}$. The \emph{environment ceiling}
arises from Solomonoff weighting, where simple environments dominate
the scalar measure.

Bostrom~\cite{bostrom2014} (Ch.~4, pp.~62--65) models the trajectory of
machine intelligence as $dI/dt = \text{optimization power} /
\text{recalcitrance}$, and considers fast, moderate, and slow takeoff
scenarios depending on the shape of the resulting curve. 
Self-improvement means the agent modifies itself to increase its
competence curve. At each step, the improved agent has a higher
$\Gamma_\rho(\pi, k)$ for some range of $k$. But the oracle curve is
fixed as it is a property of the environment, not the agent. As the
agent's curve approaches the oracle, each increment of improvement
requires closing a smaller gap, yielding diminishing returns. The
trajectory is therefore logistic with potentially rapid growth when the agent
is far from the oracle (large gap, easy gains), decelerating as it
approaches the ceiling (small gap, hard gains), and asymptotically
bounded. 

The framework suggests, though does not prove, that collective human
competence may be quite far along the logistic curve. Human civilisation for simple environments (planetary motion, basic chemistry) is highly accurate and our competence is close to oracle-level. But these environments have high Solomonoff weight and therefore dominate the scalar competence measure.
This means the trajectory from human-level to superintelligence will be less
dramatic than the fast-takeoff scenarios of Bostrom~\cite{bostrom2014}, because
improvements are concentrated in complex environments with lower Solomonoff
weight. 

Even where superintelligence is bounded by a ceiling, it might still be
transformative if reaching that ceiling matters. The framework suggests
two cases where SI is required.
The first is where a problem cannot be specified without superintelligence. We cannot
formulate a question whose answer would be a breakthrough without already
having the conceptual apparatus that superintelligence would provide. Foundational
physics beyond the standard model and mathematics not yet conceived may
fall here. By construction these are unknown unknowns and the framework
cannot say much about them.
The second is where a problem can be specified but only a superintelligence can solve
it. If a problem is specifiable,
its IC is well-defined, and we can ask whether a narrow agent targeted
at the problem could achieve oracle-level performance. A narrow agent
has lower $K(\pi)$ than a superintelligence but matches its competence on the target
task. Under any resource bias preferring shorter programs, the narrow
agent has higher competence on that task than the superintelligence. This is supported
by current AI, where narrow systems such as AlphaFold and AlphaZero
outperform general approaches on their target domains.
The value of superintelligence then, is concentrated in unknown unknowns
and in problems requiring cross-domain reasoning that resists
narrow decomposition. Whether these residual cases dominate the value of
intelligence is an empirical question that current AI development has
not yet tested. A fuller analysis is left for future work.

\subsection{Recommended learning target}

For pre-training a universal agent, the framework suggests maximising
the competence curve under the combined bias
$\rho_{\mathrm{comb}} = \alpha|p| + \beta|\mathrm{output}|$. This
decomposes into two sub-objectives. First, learn the transition function
$T_\mu$ from observed transitions (Dimension~2). Single-action
transitions reveal individual entries of $T_\mu$; multi-step
transitions test the model's compositional structure. Pre-training on
diverse environments weighted by simplicity provides the inductive
bias. Second, given a learned model, find compressed planning
strategies (Dimension~1). The combined bias rewards agents that
discover compact representations of the environment's structure rather
than relying on lookup or exhaustive search. This is the implicit
target of large-scale pre-training, providing theoretical justification
for the pre-train-then-prompt paradigm.

\subsection{Limitations and future work}

The framework is restricted to deterministic environments; the
stochastic extension introduces additional considerations around
expected versus worst-case intervention cost. The competence-curve
formalism has not been empirically validated; concrete computation of
the curve for benchmark agents in finite environments would test the
framework's predictions. Connections to bounded
optimality~\cite{russell1995} suggest a formal analogy that we have not
fully developed, particularly the question of whether the competence
curve characterises an agent's bounded-optimal frontier as resources
scale. The implications of intervention complexity for AI alignment, and
in particular the question of what kinds of reward hacking it does and
does not eliminate, deserves a separate treatment that we leave for
future work. The conclusions regarding asymptotic limits of superintelligence depend directly on the Solomonoff prior. Whilst this prior is well motivated, a different prior would yield different conclusions.

\bibliographystyle{plain}
\bibliography{intervention_complexity}

\appendix

\section{Independence of the properties}\label{app:independence}

\subsection{Properties of IC}
\label{sec:axioms}
I claim that intervention complexity is an appropriate canonical, or universal, reward function. Following is a list of properties of IC, why the property is useful, and examples of rewards from the literature that do not satisfy the property. 

\begin{axiom}[Environment-derived]\label{ax:env}
$\IC_\mu^\rho(s, s')$ is computable from $\mu$, $s$, and $s'$ alone. No external
signal, preference data, or normative input is required.
\end{axiom}

This property comes directly from the definition of IC, and is required for any canonical reward function.
Standard reinforcement learning reward functions and RLHF reward
models~\cite{christiano2017} violate this property. When a designer
hand-crafts a reward (e.g., $+1$ for reaching a goal, $-0.1$ per timestep),
or when a reward model is learned from human preference data, the reward
depends on external normative input rather than on the environment's
computational structure alone. Two identical environments can be assigned
entirely different rewards depending on the designer's intent and in such cases the definition of intelligence depends on the designer.

A subtler example is Friston's free energy
principle~\cite{friston2010}: surprise-based reward depends on the
{agent's} epistemic state (how surprised a particular agent is),
not on the environment's structure alone. Two different agents in the
same environment have different surprise levels, so the reward is not
computable from $\mu$, $s$, and $s'$ alone.

\begin{axiom}[Universal]\label{ax:univ}
$IC$ is defined for all computable environments $\mu \in \Env$ and all
states $s, s' \in S_\mu$.
\end{axiom}

Once $\rho$ is defined, $IC$ is applicable to any environment.
Potential-based reward shaping~\cite{ng1999} defines
$R(s,s') = \gamma\Phi(s') - \Phi(s)$ for a potential function $\Phi$.
This is environment-derived for any {given} environment (once $\Phi$
is fixed), but there is no general computable procedure assigning a
meaningful potential function to every computable environment---the choice
of $\Phi$ requires domain-specific knowledge about which states are
desirable. Similarly, rewards based on domain-specific features (e.g.,
Euclidean distance to goal in spatial environments) are only definable
in environments possessing the relevant structure.

\begin{axiom}[Minimality]\label{ax:min}
The intervention complexity depends on the intervention set
$I_\mu(s,s')$ and the resource function $\rho$ only through the
minimum: only the cheapest intervention contributes, and all
non-optimal interventions are discarded. Formally, for any two triples
$(\mu_1, s_1, s_1')$ and $(\mu_2, s_2, s_2')$ with
$\min_{p \in I_{\mu_1}(s_1,s_1')} \rho(p, \mu_1, s_1) =
 \min_{p \in I_{\mu_2}(s_2,s_2')} \rho(p, \mu_2, s_2)$,
we have $\IC_{\mu_1}^\rho(s_1, s_1') = \IC_{\mu_2}^\rho(s_2, s_2')$.
\end{axiom}

Consider $R(\mu, s, s') = -H(I_\mu(s,s'))$, where $H$ measures the
richness of the intervention set---for instance, the number of distinct
programs below some length threshold that achieve the transition. This
rewards transitions that are achievable in {many} ways, regardless of
whether the best way is efficient. It measures accessibility or
robustness rather than intelligence.

More substantively, Schmidhuber's compression
progress~\cite{schmidhuber2010} defines reward as the rate of improvement
of the agent's world model. This depends on the agent's entire learning
history, not merely on the
optimal intervention. Two agents in the same environment, facing the same
transition, receive different rewards depending on what they have already
learned.

\begin{axiom}[Sensitivity]\label{ax:sens}
The intervention complexity distinguishes all distinct minimum costs.
That is, $\IC$ does not collapse distinct difficulties into a single
value.
\end{axiom}

Sparse binary rewards---the standard in many RL benchmarks---assign
$R=1$ if $s' = s^*_{\text{goal}}$ and $R=0$ otherwise. This collapses
all non-goal states to the same reward regardless of whether they are one
step or a billion steps from the goal. More generally, any threshold-based
reward ($R=1$ if $\IC \leq t$, $R=0$ otherwise) partitions transitions
into ``feasible'' and ``infeasible'' and discards all fine-grained ordering
within each class. The constant reward $R(\mu,s,s') = 1$ for all
transitions---a common ``survival bonus'' in RL---is the extreme case,
assigning the same value to every transition regardless of difficulty.
Sensitivity says that a proper measure of intelligence must
be more discriminating than pass/fail.

\begin{axiom}[Achievement preference]\label{ax:eff}
Higher intervention complexity corresponds to greater achievement: a
transition that requires more resources at the optimum is harder, and
$\IC$ reflects this directly. 
\end{axiom}

Define $R(\mu, s, s') = -\IC_\mu^\rho(s,s')$---reward that {decreases}
with intervention complexity, valuing ease over achievement. This
satisfies all other properties: it is environment-derived, universal,
depends only on IC (minimality), and distinguishes
all distinct IC values (sensitivity). But it reverses the direction:
easier transitions are rewarded more highly. This is a ``lazy'' reward
that assigns maximum value to doing nothing ($\IC = 0$) and penalises
any genuine accomplishment. Achievement preference
encodes the substantive judgement that intelligence is demonstrated by
solving {harder} problems, not easier ones.
\begin{remark}
$\IC_\mu^\rho$ specifies the best any agent can do---the efficiency with which
an agent achieves a transition---how close its actual cost is to the
IC minimum---is captured by Dimension~2 (Section~\ref{app:dim2}), not
by the reward functional.
\end{remark}

There are two consequences of the above properties. The first is a direct consequence of the
non-triviality of $\rho$, there exist $\mu, s, s_1', s_2'$ with
$\IC_\mu^\rho(s, s_1') \neq \IC_\mu^\rho(s, s_2')$ and hence:
\begin{lemma}[Non-degeneracy]
$\IC_\mu^\rho$ is non-degenerate:
there exist $\mu \in \Env$ and $s, s_1', s_2' \in S_\mu$
such that $\IC_\mu^\rho(s, s_1') \neq \IC_\mu^\rho(s, s_2')$.
\end{lemma}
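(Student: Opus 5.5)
The plan is to prove non-degeneracy with one explicit witness rather than by appealing to non-triviality of $\rho$ abstractly. Non-triviality only says that $\rho$ takes two distinct values on some programs. It does not immediately say that two \emph{minima} over intervention sets differ. So I would build a concrete finite environment in which the minima are forced apart.

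\textbf{The witness.} Take $\mu$ with $S_\mu=\{0,1,2\}$, $A_\mu=\{a\}$, and $T_\mu(i,a)=\min(i+1,2)$, which is clearly computable. Set $s=0$, $s_1'=0$, and $s_2'=1$.
\begin{itemize}
\item Every program in $I_\mu(0,0)$ outputs the empty action sequence.
\item Every program in $I_\mu(0,1)$ outputs exactly one action $a$.
\end{itemize}
Both sets are nonempty, so both IC values are finite. Since $\rho$ is well-ordered, both minima are attained.

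\textbf{Showing the minima differ.} For the intrinsic biases $\rho_a$ and $\rho_e$ the argument is immediate. We get $\IC_\mu^{\rho_a}(0,0)=0$ and $\IC_\mu^{\rho_a}(0,1)=1$, and similarly for $\rho_e$ whenever the action has positive energy cost. This matches the convention $\IC(s,s)=0$ used in the quasimetric remark.

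For a general $\rho$ I would instead use the fact that $\IC_\mu^\rho(s,s')=\infty$ when $s'$ is unreachable. Add an absorbing state $3$ that is unreachable from $0$, so $\IC_\mu^\rho(0,3)=\infty$. Then compare it with the finite value $\IC_\mu^\rho(0,0)$, which is the minimum of $\rho$ over a nonempty set of programs, for example a program printing nothing. A finite value differs from $\infty$, and this witness works for \emph{every} resource function, with no appeal to the particular form of $\rho$.

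\textbf{Main obstacle.} The difficulty is interpretive. Does $\IC=\infty$ count as a legitimate distinct value? The stated lemma only asks for $\neq$, and Definition~\ref{def:ic} explicitly permits the value $\infty$, so the unreachable-target witness suffices. If one insists on finite distinct values, the argument needs an extra assumption: that $\rho$ distinguishes programs by their output behaviour, as $\rho_a$, $\rho_e$ and $\rho_\ell$ do. Under that assumption, a chain environment of length $k$ separates the empty transition from a $k$-step transition once $k$ is large enough that no short program outputs $a^k$. For $\rho_\ell$ this is a standard counting argument: there are only finitely many programs below each length. I would present the $\infty$ witness as the proof and note the finite refinement as a remark.
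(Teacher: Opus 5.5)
Your proof is correct, but it takes a different route from the paper. The paper gives essentially no argument: it calls the lemma a ``direct consequence of the non-triviality of $\rho$'', meaning that $\rho$ takes at least two distinct values. You rightly point out that this does not automatically carry over to minima over intervention sets. Here is a concrete failure: let $\rho$ be $0$ on every program except one fixed $p_0$, where it is $1$. This $\rho$ is non-trivial, yet every nonempty $I_\mu(s,s')$ contains infinitely many programs other than $p_0$, so every finite IC value equals $0$. Your witness compares a reachable target (for instance $s_1'=s$, whose IC is a finite minimum of non-negative reals) with an unreachable absorbing state (whose IC is $\infty$ by Definition~\ref{def:ic}). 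It works for every $\rho$, needs neither non-triviality nor the form of the bias, and is fully rigorous, so it is more general than the paper's appeal. The cost is content: it shows only that IC separates reachable from unreachable targets. The paper's appeal to non-triviality is evidently aimed at distinct finite difficulty levels, which is the sense that matters for Sensitivity and the competence curve. That stronger statement genuinely needs an extra hypothesis tying $\rho$ to output behaviour, as your closing remark says. Your chain-plus-counting argument supplies it for $\rho_a$, $\rho_e$ and $\rho_\ell$, which is more than the paper actually proves.
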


\begin{lemma}[Ordinal invariance]
$\IC_\mu^\rho$
is ordinally invariant under change of UTM. Specifically, 
the ranking induced by $\IC$ is robust to the choice of UTM, with the
strength of the guarantee depending on the resource bias:
\begin{enumerate}[label=(\roman*)]
  \item For intrinsic biases whose cost depends only on the output
    (action count, energy): $\IC$ does not depend on the UTM at all, and
    the ranking is exactly preserved.
  \item For program-length bias: the invariance theorem for Kolmogorov
    complexity~\cite{lv2008} gives an additive bound
    $|\IC_{\mu,\UTM_1}^{\rho_\ell} - \IC_{\mu,\UTM_2}^{\rho_\ell}| \leq c$
    for a constant $c$ depending only on the UTM pair. The ranking is
    preserved for all pairs of transitions whose IC values differ by more
    than $2c$.
  \item For execution-time bias: the simulation theorem gives a polynomial
    bound $\IC_{\mu,\UTM_2}^{\rho_t} \leq \mathrm{poly}(\IC_{\mu,\UTM_1}^{\rho_t})$.
    The ranking is preserved for all pairs of transitions whose IC values
    differ by more than a polynomial factor.
\end{enumerate}
\end{lemma}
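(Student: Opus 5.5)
The proof splits into the three regimes of the statement, and each is handled separately.

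\emph{Case (i): output-only biases.} For action count and energy, $\rho(p,\mu,s)$ is a function $f(\mathrm{output}(p))$ of the emitted action sequence alone. Write $W_\mu(s,s')$ for the set of finite action sequences driving $s$ to $s'$ under $T_\mu$. This set is defined without reference to any machine.

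On any UTM $\UTM$, every finite action sequence $w$ is output by some program, for example a program that hardcodes $w$. Hence the image of $I_{\mu,\UTM}(s,s')$ under $\mathrm{output}$ is exactly $W_\mu(s,s')$. It follows that
\[
\IC_{\mu,\UTM}^{\rho}(s,s')=\min_{w\in W_\mu(s,s')} f(w),
\]
and the right-hand side is independent of $\UTM$. The IC values are therefore identical across machines, and so is the ranking. The minimum exists because $\rho$ is well-ordered, and the empty case gives $\infty$ under both machines.

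\emph{Case (ii): program length.} The invariance theorem supplies a translator prefix $q_{12}$ such that $\UTM_2(q_{12}p)=\UTM_1(p)$ for all $p$. If $p\in I_{\mu,\UTM_1}(s,s')$, then $q_{12}p$ outputs the same action sequence on $\UTM_2$, so $q_{12}p\in I_{\mu,\UTM_2}(s,s')$. Taking minima gives
\[
\IC_{\UTM_2}^{\rho_\ell}\le \IC_{\UTM_1}^{\rho_\ell}+|q_{12}|.
\]
The symmetric argument with a translator $q_{21}$ gives the reverse inequality, and we set $c=\max(|q_{12}|,|q_{21}|)$.

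For the ranking claim, suppose $\IC_{\UTM_1}(x)>\IC_{\UTM_1}(y)+2c$. Then
\[
\IC_{\UTM_2}(x)\ge \IC_{\UTM_1}(x)-c>\IC_{\UTM_1}(y)+c\ge \IC_{\UTM_2}(y),
\]
so the order is preserved. The threshold $2c$ is sufficient, even though a difference of more than $c$ would already do.

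\emph{Case (iii): execution time.} The argument mirrors case (ii), with the simulation theorem in place of the invariance theorem. A $\UTM_2$-program simulating $p$ runs in at most $P(t_1(p))$ steps for some polynomial $P$ that depends only on the machine pair, where $t_1(p)$ is the running time of $p$ on $\UTM_1$. Minimising over $I_\mu(s,s')$ gives $\IC_{\UTM_2}^{\rho_t}\le P(\IC_{\UTM_1}^{\rho_t})$, and a polynomial $Q$ gives the reverse direction.

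Ranking preservation then requires an interpretation of ``differ by more than a polynomial factor.'' We read it as the hypothesis $\IC_{\UTM_1}(x)>Q(P(\IC_{\UTM_1}(y)))$, with $P$ and $Q$ taken monotone. Under this hypothesis,
\[
\IC_{\UTM_2}(x)\ge Q^{-1}(\IC_{\UTM_1}(x))>P(\IC_{\UTM_1}(y))\ge \IC_{\UTM_2}(y).
\]

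\emph{Main obstacle.} The difficult step is case (iii). Simulation overhead includes a term from the constant-size translation prefix, so the bound holds only up to additive constants; that is, $P$ must be of the form $\mathrm{poly}(t)+O(1)$. This correction needs to be stated explicitly. In addition, monotonicity of $P$ and $Q$ must be imposed so that $Q^{-1}$ is well-defined.

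A lesser subtlety arises in cases (ii) and (iii). The argument requires that the action sequence output by the translated program be the same as the output of $p$, so that the translated program lies in the intervention set. This holds because interventions are defined purely through their output.
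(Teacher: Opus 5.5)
Your proof is correct and follows the paper's case-by-case route: output-determined costs make $\IC$ independent of the UTM, invariance-theorem translators bound the shift by at most $c$, and simulation overhead gives the polynomial version. It also supplies details the paper only asserts, namely that every finite action sequence is output by some program on any UTM in (i), and that (iii) needs the reverse bound $\IC_{\UTM_1}\le Q(\IC_{\UTM_2})$ as well as the stated one.

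One correction to your aside in (ii): a gap of more than $c$ does not already suffice. The two transitions can shift in opposite directions when you change machine. The bounds only guarantee preservation once the gap exceeds $|q_{12}|+|q_{21}|$, which can be as large as $2c$.
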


\begin{proof}
In each case, the argument is the same: if two transitions have IC values
that are sufficiently separated under $\UTM_1$ (where ``sufficiently''
depends on the invariance class of $\rho$), they remain separated in the
same direction under $\UTM_2$.

For (i): if $\rho$ depends only on the output action sequence (not on
the program or UTM), then $\IC_\mu^\rho$ is independent of the UTM
entirely, and invariance is exact.

For (ii): if $\IC_{\mu,\UTM_1}^{\rho_\ell}(s,s_1') <
\IC_{\mu,\UTM_1}^{\rho_\ell}(s,s_2')$ with a gap exceeding $2c$, then
under $\UTM_2$ each value shifts by at most $c$, so the ordering is
preserved.

For (iii): if $\IC_{\mu,\UTM_1}^{\rho_t}(s,s_1')$ and
$\IC_{\mu,\UTM_1}^{\rho_t}(s,s_2')$ differ by more than a polynomial
factor, the polynomial simulation overhead cannot reverse the ordering.
\end{proof}


\section{Metric structure of IC}\label{app:metric}


We examine how IC behaves under composition of transitions, showing
that it is sub-additive but not additive, and that it forms a directed
quasimetric on the state space.

\begin{proposition}[Sub-additivity]
For all $\mu, s, s', s''$ with $s'$ reachable from $s$ and $s''$
reachable from $s'$,
\[
  \IC_\mu^\rho(s, s'') \;\leq\; \IC_\mu^\rho(s, s') + \IC_\mu^\rho(s', s'') + O(1),
\]
where the $O(1)$ constant accounts for the overhead of composing two
programs (and is zero for action-count bias).
\end{proposition}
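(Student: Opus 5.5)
The plan is to take optimal interventions for the two legs and build from them an intervention for the combined transition. Since $s'$ is reachable from $s$ and $s''$ from $s'$, both intervention sets $I_\mu(s,s')$ and $I_\mu(s',s'')$ are nonempty, so both IC values are finite. Because $\rho$ is well-ordered, the minima in Definition~\ref{def:ic} are attained. Fix $p_1 \in I_\mu(s,s')$ with $\rho(p_1,\mu,s)=\IC_\mu^\rho(s,s')$ and $p_2 \in I_\mu(s',s'')$ with $\rho(p_2,\mu,s')=\IC_\mu^\rho(s',s'')$.

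Next, construct a composed program $q = p_1 ; p_2$. We fix once and for all a constant-size wrapper $W$ for $\UTM$. Given a self-delimiting encoding of $p_1$ followed by $p_2$, $W$ runs $p_1$, emits its output $a_1,\dots,a_n$, then runs $p_2$ and emits $b_1,\dots,b_m$. Determinism of $T_\mu$ gives the key fact: applying $a_1,\dots,a_n$ from $s$ lands at $s'$, and applying $b_1,\dots,b_m$ from $s'$ lands at $s''$. Hence the concatenated sequence takes $s$ to $s''$, so $q \in I_\mu(s,s'')$ and $\IC_\mu^\rho(s,s'') \le \rho(q,\mu,s)$.

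It then remains to bound $\rho(q,\mu,s)$ by $\rho(p_1,\mu,s) + \rho(p_2,\mu,s') + O(1)$, checking each bias separately.
\begin{itemize}
\item \textbf{Action count, and additive per-action energy.} The output of $q$ is the concatenation of the two outputs, so the cost is exactly additive and the constant is zero. This matches the claim in the statement.
\item \textbf{Execution time.} The cost is the sum of the two running times plus the wrapper's bookkeeping, which is an additive $O(1)$ term if the sequencing is charged per program. Under a UTM with real simulation overhead, this term would become an overhead factor instead.
\item \textbf{Program length.} We get $|q| = |W| + |p_1| + |p_2|$ plus the cost of making the concatenation parseable.
\end{itemize}

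The main obstacle is the program-length case. A naive prefix encoding of $p_1$ costs $O(\log|p_1|)$ bits, not $O(1)$. I would resolve this the standard way, by working with prefix-free programs (the prefix complexity convention of \cite{lv2008}). Then $W$ can read $p_1$ off the tape, since its end is self-delimited, and read $p_2$ next, so the overhead is just $|W| = O(1)$. If plain programs are insisted on, I would state the bound with an $O(\log \IC_\mu^{\rho_\ell}(s,s'))$ term and remark that it reduces to $O(1)$ under the prefix-free convention.

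For general $\rho$, I would state the hypothesis actually used: $\rho$ is subadditive under this fixed composition up to a constant. I would note that this holds for every example resource function listed in the definition.
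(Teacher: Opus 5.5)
Your proposal is correct and follows the paper's proof: compose optimal programs for the two legs with a fixed wrapper that concatenates their outputs, which gives exact additivity under action-count bias and $|p_1|+|p_2|+O(1)$ under program-length bias. The paper's proof does not address the self-delimiting issue. Your point that the $O(1)$ program-length overhead needs the prefix-free convention, with a logarithmic term otherwise, is correct, and your fix is the right one.
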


\begin{proof}
Given an optimal program $p_1$ for $s \to s'$ and an optimal program
$p_2$ for $s' \to s''$, construct a program $p$ that runs $p_1$ to
obtain an action sequence from $s$ to $s'$, then runs $p_2$ to obtain an
action sequence from $s'$ to $s''$, and outputs the concatenation. Under
action-count bias, $\rho_a(p) = \rho_a(p_1) + \rho_a(p_2)$ exactly.
Under program-length bias, $|p| = |p_1| + |p_2| + O(1)$ where the
$O(1)$ is the fixed-length concatenation wrapper.
\end{proof}

\begin{proposition}[Non-additivity]
There is no composition operation $\oplus$ such that for all $\mu, s, s', s''$,
\[
  \IC_\mu^\rho(s, s'') = \IC_\mu^\rho(s, s') \oplus \IC_\mu^\rho(s', s'').
\]
\end{proposition}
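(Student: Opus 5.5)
The plan is a counterexample argument. A composition operation $\oplus$ is just a function of two arguments, so if the identity held it would make $\IC_\mu^\rho(s,s'')$ a function of the pair $\bigl(\IC_\mu^\rho(s,s'),\IC_\mu^\rho(s',s'')\bigr)$ alone. It therefore suffices to find two triples, possibly in different environments, with the same pair of leg values but different direct values. The triples may even share one environment.

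I will work first with action-count bias $\rho_a$, where IC is the shortest-path distance in the state graph by Theorem~\ref{thm:ac-compute}. Take environment $\mu_1$ with states $\{0,1,2\}$ and two actions:
\begin{itemize}[leftmargin=*]
\item action $a$ sends $0\to1$ and $1\to2$, and fixes $2$;
\item action $b$ sends $0\to2$ and fixes $1$ and $2$.
\end{itemize}
Then $\IC(0,1)=1$, $\IC(1,2)=1$ and $\IC(0,2)=1$, the last via the shortcut $b$.

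Take environment $\mu_2$ identical except that $b$ is the identity. There $\IC(0,1)=1$ and $\IC(1,2)=1$, but $\IC(0,2)=2$. If the identity held, it would give both $1=1\oplus1$ and $2=1\oplus1$, a contradiction.

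To cover the alternative reading in which $\oplus$ may depend on $\mu$, I will instead place both configurations inside a single environment. Keep $0\to1\to2$ via $a$ with the shortcut $b\colon0\to2$, and add a disjoint chain $3\to4\to5$ via $a$ on which $b$ acts trivially. This gives
\[
\IC(0,2)=1\neq 2=\IC(3,5),
\]
while all four legs have IC equal to $1$.

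The real obstacle is generality: the statement is quantified over an arbitrary admissible $\rho$, not just $\rho_a$. For program-length bias I will adapt the construction by making the legs' IC values equal while the direct transitions differ. I plan to use a long chain of length $N$ for the slow configuration and compare it with one having a single-action shortcut. Both legs are hard in both configurations, while the direct transition is cheap only in the shortcut environment.

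The difficulty is controlling the exact program-length values so the leg values coincide exactly. I would handle this by applying the same symmetric construction inside one environment (two isomorphic copies differing only in the shortcut action), so that the leg programs are interchangeable up to relabelling. I would then accept that for a general $\rho$ the argument rests on $\rho$ being able to detect a shortcut, using its non-triviality. I would state this dependence explicitly and present the action-count case as the rigorous core.
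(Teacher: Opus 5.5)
Your action-count argument is correct, in both the two-environment and the single-environment versions. It also takes a different route from the paper: you compare a shortcut with a chain having the same leg values, whereas the paper uses the round trip $s''=s$.

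The gap is generality. The statement is about $\IC_\mu^\rho$ for an arbitrary admissible $\rho$, and you prove it only for $\rho_a$. For $\rho_\ell$ you give a sketch, and for general $\rho$ you fall back on the claim that non-triviality of $\rho$ lets it ``detect a shortcut''. That claim is false. Take $\rho(p,\mu,s)=g(\mu)$ with $g$ computable, $\mathbb{N}$-valued and non-constant. This $\rho$ is computable, well-ordered and non-trivial across $\Env$, yet $\IC_\mu^\rho\equiv g(\mu)$ on every reachable pair. So $x\oplus y:=x$ satisfies the identity on every triple whose two legs are reachable. Hence no construction that uses only reachable transitions, yours included, can prove the statement for all $\rho$. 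Even for $\rho_\ell$, ``interchangeable up to relabelling'' does not force the leg values of your two copies to be exactly equal once programs receive $(\mu,s)$ as input, as in the oracle regime, because the copies have different start states.

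The missing idea is to use features of $\IC$ that do not depend on $\rho$. The paper's choice $s''=s$ is one: $\IC_\mu^\rho(s,s)$ is always finite, since the program outputting the empty action sequence is an intervention. As written, the paper's argument only yields $c_1\oplus c_2=\IC(s,s)$ on round-trip pairs. A contradiction needs a second triple with the same legs and a different direct value.

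Irreversibility supplies this uniformly, because $\IC=\infty$ exactly when $I_\mu$ is empty, whatever $\rho$ is. Note that the statement, unlike sub-additivity, imposes no reachability hypothesis. Let $S=\{s,s',t\}$ with a single action $a$ acting by $s\mapsto s'$, $s'\mapsto s'$, $t\mapsto t$. The triples $(s,s',s)$ and $(s,s',t)$ have literally the same legs, $\bigl(\IC(s,s'),\infty\bigr)$. But $\IC(s,s)<\infty$ while $\IC(s,t)=\infty$, so $\IC(s,s')\oplus\infty$ would have to be both finite and infinite.

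This works for every $\rho$ and for either access regime. By the $g(\mu)$ example, an unreachable leg of this kind is unavoidable in general. Your $\rho_a$ construction can then be kept as a supplementary observation: for natural biases, non-additivity already appears among reachable transitions.
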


\begin{proof}
Set $s'' = s$. Then $\IC_\mu^\rho(s, s) = 0$ (the empty action sequence
achieves the identity transition, with minimal resource cost), while
$\IC_\mu^\rho(s, s')$ and $\IC_\mu^\rho(s', s)$ may both be arbitrarily
large. No operation $\oplus$ satisfies $0 = c_1 \oplus c_2$ for
arbitrary positive $c_1, c_2$.
\end{proof}

\begin{remark}[Directed quasimetric]
IC satisfies three of the four metric axioms: non-negativity
($\IC \geq 0$), identity ($\IC(s,s) = 0$), and the triangle inequality
(Proposition~\ref{prop:subadd}). It violates symmetry:
$\IC(s,s') \neq \IC(s',s)$ in general, as the non-additivity proof
illustrates (the return trip may be much harder than the outward
journey). IC is therefore a {directed quasimetric} on the state
space---a natural structure for environments with irreversible
transitions.

The combination of sub-additivity and non-additivity 
means the direct route from $s$ to $s''$ can be strictly cheaper than
any decomposition through intermediate states (non-additivity), but it
can never be more expensive (sub-additivity). An agent that discovers
direct shortcuts is more intelligent than one that routes through
intermediate states, and the $\IC$ correctly credits this.
\end{remark}


\section{Two dimensions in detail}\label{app:dim2}


Agent competence $\Gamma_\rho$ (Section~\ref{app:dim2}) provides
{Dimension~1}: a static measure of how well a fixed agent performs
relative to the oracle across all tasks and environments. The oracle
achieves the maximum possible competence curve; any real agent's curve
lies below it.

But agent competence is a snapshot as it evaluates the agent at a single
point in time. A real agent does not have perfect knowledge. It must
build a model of $\mu$ through interaction, and its competence should
improve as it gains experience. The rate at which the competence curve
improves is {Dimension~2: learning efficiency}. In this section
we formalise Dimension~2 and discuss its relationship to Dimension~1.

\subsection{The oracle--learner decomposition}

Consider an oracle agent $\pi^*$ that knows $\mu$ completely and always
achieves $\IC_\mu^\rho(s, s')$ for every transition task it is given. This
agent has maximal Dimension~1 competence: its per-task contribution is
$\IC^2/\IC = \IC$, giving full credit for each task's difficulty.

A resource-limited agent $\pi$ must learn about $\mu$ through experience.
We model this as a sequence of {tasks}: at each time step $t$, the
agent is presented with a transition task $(s_t, s_t^*)$---a start state
and a target state, which may be arbitrarily far apart. The agent produces
a program $p_t$ based on its accumulated experience, and this program
outputs an action sequence that (if successful) transitions the
environment from $s_t$ to $s_t^*$. The $\rho$-cost of this intervention
is $\rho(p_t, \mu, s_t) \geq \IC_\mu^\rho(s_t, s_t^*)$.

In the course of executing $p_t$, the agent observes a sequence of
intermediate states, each revealing information about $T_\mu$. These
observations accumulate into the agent's experience:
\[
  H_t = H_{t-1} \;\cup\; \{\text{state transitions observed while
  executing task } t\}.
\]
This experience informs the agent's interventions on subsequent tasks.

\begin{definition}[Instantaneous regret]
The \emph{instantaneous regret} at task $t$ is
\[
  \Delta_t(\pi, \mu) \;=\; \rho(p_t, \mu, s_t) - \IC_\mu^\rho(s_t, s_t^*),
\]
measuring the excess cost of $\pi$'s intervention relative to the oracle
on this specific task.
\end{definition}

\begin{definition}[Cumulative regret]
The \emph{cumulative regret} of $\pi$ in $\mu$ up to task $T$ is
\[
  \mathrm{Regret}_T(\pi, \mu) \;=\; \sum_{t=1}^{T} \Delta_t(\pi, \mu).
\]
\end{definition}

The asymptotic growth rate of $\mathrm{Regret}_T$ captures learning
efficiency: $O(1)$ indicates eventual perfect learning; $O(\log T)$ is the
classical good-learner rate; $O(\sqrt{T})$ is typical of adversarial
settings; and $O(T)$ indicates failure to learn. Note that the regret is
measured relative to $\IC$---the optimal intervention cost for each
task---not relative to the agent's previous performance. An agent that
solves an easy task inefficiently incurs regret even if the task is trivial,
while an agent that solves a hard task optimally incurs none.

\subsection{Three complementary evaluations}\label{sec:three-evals}

The regret framework requires a specification of how transition tasks
$(s, s^*)$ are presented to the agent. We identify three natural choices,
each capturing a different aspect of learning.

\medskip\noindent\textbf{(A) Self-directed exploration.}
The agent selects its own targets. Regret is measured against whatever
transitions the agent attempts. This rewards agents that are strategically
curious---choosing transitions that maximise information gain about the
environment's structure. Under this evaluation, agents employing strategies
such as surprise minimisation~\cite{friston2010} or compression
progress~\cite{schmidhuber2010} may achieve low regret, as these heuristics
are plausibly efficient strategies for building causal models.

\medskip\noindent\textbf{(B) Adversarial task sequence.}
Targets are chosen by an adversary to maximise cumulative regret. This
measures worst-case robustness: how well does the agent perform when the
sequence of tasks is specifically designed to be difficult? This connects
to adversarial online learning and gives a minimax characterisation of
learning efficiency.

\medskip\noindent\textbf{(C) Average over all tasks.}
After $T$ steps of experience, the agent's generalisation is evaluated
by averaging the regret over all reachable transitions:
\[
  G_T(\pi, \mu) \;=\; \frac{1}{|S_\mu|^2}
  \sum_{s, s' \in S_\mu}
  \left[\rho(p_\pi(s, s', H_T), \mu, s) - \IC_\mu^\rho(s, s')\right],
\]
where $p_\pi(s, s', H_T)$ is the intervention $\pi$ produces for the
$(s, s')$ transition given experience $H_T$. This most directly measures
the quality of the agent's learned causal model, since it tests
generalisation to transitions the agent may not have encountered.

\medskip

These three evaluations are complementary rather than competing. Evaluation
(C) measures {model quality}---how accurate is the agent's causal model
at time $T$? Evaluation (A) measures {exploration intelligence}---how
cleverly does the agent gather information? Evaluation (B) measures
{robustness}---how reliably does the agent learn regardless of task
ordering?

\subsection{Dimension~1 properties do not constrain Dimension~2}

An important structural observation is that the properties of
Section~\ref{sec:axioms}---particularly minimality
(Property~\ref{ax:min})---apply to Dimension~1 and have no analogue in
Dimension~2. Minimality constrains the {reward functional}: when
evaluating a single state transition, only the optimal intervention matters.
Dimension~2 evaluates a {learning process}, which inherently depends on
the full structure of the intervention set and the agent's interaction
history.

This separation has a clarifying consequence for existing theories.
Schmidhuber's compression progress and Friston's free energy minimisation
both violate minimality when proposed as {definitions of the goal}
of intelligence (Dimension~1), because they depend on the agent's learning
trajectory rather than on the optimal intervention. However, they may be
excellent {strategies for learning} (Dimension~2)---heuristics that
help an agent close the gap to $\IC$ efficiently. The framework correctly
places these theories as accounts of the engine of learning rather than
definitions of the target of intelligence.

\subsection{The universal learning efficiency measure}

Aggregating across environments with the Solomonoff prior, define the
\emph{$\rho$-learning efficiency} of agent $\pi$ under evaluation
scheme $X \in \{A, B, C\}$ as:
\begin{equation}
  L_\rho^X(\pi) \;=\; \sum_{\mu} 2^{-K(\mu)} \cdot \ell^X(\pi, \mu),
\end{equation}
where $\ell^X(\pi, \mu)$ is a summary of $\pi$'s regret curve in $\mu$
under evaluation $X$ (e.g., the negative discounted cumulative regret
$-\sum_{t=1}^{\infty} \gamma^t \Delta_t$, or the negative asymptotic
regret rate).

The weighting by $2^{-K(\mu)}$ means that simple environments contribute
more to the learning efficiency score. This provides a principled escape
from the No Free Lunch theorems~\cite{wolpert1997}: while no learner can
outperform all others across {all} possible environments, a learner
biased toward simplicity (like a Solomonoff predictor) achieves low regret
in the environments that are weighted most heavily. The same inductive bias
that makes Dimension~1 tractable also makes Dimension~2 tractable.

\subsection{The complete two-dimensional characterisation}

The full intelligence of agent $\pi$ is characterised by the pair
\[
  \mathcal{I}_\rho(\pi) \;=\;
  (\Gamma_\rho(\pi, \cdot),\; L_\rho^X(\pi)),
\]
where $\Gamma_\rho(\pi, \cdot)$ is the competence curve
(Section~\ref{app:dim2}), providing a full profile of the agent's
capabilities as a function of task difficulty, and $L_\rho^X$ measures
how quickly this competence improves with experience. When a scalar
summary of Dimension~1 is needed, the Solomonoff-weighted integral
$\Gamma_\rho(\pi)$ (Definition~\ref{app:dim2}) serves as a
principled compression.

For the oracle, $L_\rho^X$ is maximal (zero regret from the start) and
the competence curve achieves its maximum at every difficulty level. For
real agents, there is a natural tradeoff: a simpler agent may learn quickly
(high $L_\rho^X$) but plateau with a low competence curve,
while a more complex agent may learn slowly but achieve higher competence
eventually.

\begin{remark}[Possible recursive structure]
Under evaluation (A), the agent's choice of what to explore is itself an
intervention---an action taken to produce a desired epistemic state. The
efficiency of this meta-intervention could, in principle, be measured by
$\IC$ in a meta-environment where the state space is the agent's epistemic
state. If this construction is coherent, Dimension~2 partially collapses
into Dimension~1 applied at a meta-level, potentially unifying the two
dimensions. Avoiding circularity in this construction (the agent's epistemic
state depends on its model of $\IC$, which is what it is trying to learn)
requires careful treatment, and we leave this as an open question.
\end{remark}


\section{Full separation results}\label{app:proofs}


The choice of resource bias $\rho$ determines not only the {meaning}
of the intelligence measure but also its {computability}. In this
section we restrict to finite deterministic environments and establish a
sharp separation between action-count and program-length biases. The
analysis also reveals that the gap between oracle-access and bare
(no-oracle) IC quantifies the information content of Dimension~2.

\subsection{Oracle and bare intervention complexity}

Our framework (Section~\ref{sec:framework}) gives the intervening program
$p$ oracle access to the environment $\mu$: the program may query $T_\mu$
during its computation. This models an agent with complete knowledge of the
environment. We now distinguish this from the case where the program must
encode its own knowledge.

\begin{definition}[Oracle and bare IC]
Let $\mu$ be a finite deterministic environment and $s, s' \in S_\mu$.
\begin{enumerate}[label=(\roman*)]
\item The \emph{oracle IC} is defined as before:
\[
  \IC_\mu^{\rho,\mathrm{orc}}(s,s') = \min\{|p| : \UTM(p, \mu, s)
  \text{ outputs a valid action sequence from } s \text{ to } s'\}.
\]
\item The \emph{bare IC} is:
\[
  \IC_\mu^{\rho,\mathrm{bare}}(s,s') = \min\{|p| : \UTM(p)
  \text{ outputs a valid action sequence from } s \text{ to } s' \text{ in } \mu\},
\]
where $p$ receives no input and has no oracle access.
\end{enumerate}
\end{definition}

Since a bare program can be converted to an oracle program (that simply
ignores the oracle), we have
$\IC_\mu^{\rho,\mathrm{orc}}(s,s') \leq \IC_\mu^{\rho,\mathrm{bare}}(s,s')$
for program-length bias. The gap between them measures the cost of
{environmental knowledge}.

\subsection{Action-count bias: oracle access is irrelevant}

Under action-count bias $\rho_a$, the cost of a program is the number of
actions in its output, not its description length. Oracle access helps
the program {find} the shortest action sequence, but does not change
the sequence's length.

\begin{proposition}\label{prop:ac-oracle}
For all finite deterministic environments $\mu$ and states $s, s' \in S_\mu$,
\[
  \IC_\mu^{\rho_a,\mathrm{orc}}(s,s')
  = \IC_\mu^{\rho_a,\mathrm{bare}}(s,s')
  = d_{G_\mu}(s,s'),
\]
where $d_{G_\mu}(s,s')$ is the shortest-path distance from $s$ to $s'$
in the directed graph $G_\mu$ induced by $T_\mu$ (with $d = \infty$ if
$s'$ is unreachable).
\end{proposition}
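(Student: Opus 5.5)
The plan is to prove the two inequalities
\[
\IC_\mu^{\rho_a,\mathrm{orc}}(s,s') \geq d_{G_\mu}(s,s')
\qquad\text{and}\qquad
\IC_\mu^{\rho_a,\mathrm{bare}}(s,s') \leq d_{G_\mu}(s,s'),
\]
and then close the chain using the trivial comparison $\IC^{\rho_a,\mathrm{orc}} \leq \IC^{\rho_a,\mathrm{bare}}$. That comparison holds because a bare program is also an oracle program that simply ignores its oracle, so the oracle intervention set contains the bare one. The same argument given in the text for program-length bias works verbatim for any $\rho$ that depends only on the program's output.

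\textbf{Lower bound.} Let $p$ be any oracle intervention from $s$ to $s'$, and let $a_1,\ldots,a_n$ be the action sequence it outputs. Because $\mu$ is deterministic, the states $s_0=s$, $s_i = T_\mu(s_{i-1},a_i)$ form a directed walk in $G_\mu$ of length $n$ that ends at $s_{n}=s'$. The cost $\rho_a(p)$ is by definition $n$, and $n$ is at least the shortest-path distance $d_{G_\mu}(s,s')$. Taking the minimum over $p$ gives the lower bound. If $s'$ is unreachable, no such walk exists, so $I_\mu(s,s')$ is empty in both regimes and both ICs equal $\infty = d$ by Definition~\ref{def:ic}.

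\textbf{Upper bound.} If $s'$ is reachable, fix a shortest path $s=s_0\to s_1\to\cdots\to s_d=s'$ in $G_\mu$ and choose actions $a_i$ with $T_\mu(s_{i-1},a_i)=s_i$. Let $p$ be the bare program that prints the literal sequence $a_1,\ldots,a_d$ and halts. Since $\mu$ is finite, this sequence is a finite object and $p$ exists as a program for $\UTM$ needing no input. It lies in the bare intervention set and has $\rho_a(p)=d$. The case $s=s'$ is covered by the empty sequence, giving $d=0$.

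The main obstacle is the implicit assumption that $\rho_a$ is well defined and equal to output length regardless of regime. In other words, an oracle program cannot ``do'' transitions in any way that escapes being counted as output actions. I would make this explicit by noting that the cost is charged on the emitted sequence alone, which is exactly the output-only clause of Lemma~\ref{lem:inv}(i). A secondary point is that the minimum in Definition~\ref{def:ic} is attained; this holds because $\rho_a$ takes values in $\mathbb{N}\cup\{\infty\}$, which is well-ordered. Theorem~\ref{thm:ac-compute} then follows by computing $d_{G_\mu}$ with BFS.
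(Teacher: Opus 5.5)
Your proof is correct and takes essentially the same approach as the paper: under $\rho_a$ the cost is the output length in either regime, every valid output traces a walk of length at least $d_{G_\mu}(s,s')$, and a program emitting a shortest path attains that length. Your sandwich via $\IC^{\rho_a,\mathrm{orc}} \leq \IC^{\rho_a,\mathrm{bare}}$, the explicit literal-printing witness, and the separate handling of the unreachable and $s=s'$ cases spell out what the paper's three-sentence proof leaves implicit.
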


\begin{proof}
Under $\rho_a$, the cost of program $p$ is the length of its output action
sequence, regardless of $p$'s description length or oracle access. Any
program outputting a shortest action sequence from $s$ to $s'$ achieves
cost $d_{G_\mu}(s,s')$, whether or not it uses oracle access. Conversely,
no program can achieve cost less than $d_{G_\mu}(s,s')$, since the output
must be a valid path and no valid path is shorter than the shortest one.
\end{proof}

\begin{theorem}[Computability of action-count IC]\label{thm:ac-compute-app}
There exists an algorithm that, given a finite deterministic environment
$\mu = (S, A, T)$ and states $s, s' \in S$, computes
$\IC_\mu^{\rho_a}(s,s')$ in time $O(|S| \cdot |A|)$.
\end{theorem}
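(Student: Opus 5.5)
The plan is to reduce the computation to a shortest-path problem via Proposition~\ref{prop:ac-oracle}, which already gives $\IC_\mu^{\rho_a}(s,s') = d_{G_\mu}(s,s')$ in both the oracle and bare regimes. So it suffices to compute the unweighted directed distance from $s$ to $s'$ in the state graph $G_\mu$. This graph has vertex set $S$ and, for each $u \in S$ and $a \in A$, an edge $u \to T(u,a)$; it therefore has exactly $|S|$ vertices and at most $|S|\cdot|A|$ edges, with parallel edges and self-loops allowed. The input $\mu$ is a finite deterministic environment, which I will take to be given as an explicit transition table. Each evaluation $T(u,a)$ is then an $O(1)$ lookup, and the algorithm can also just call the computable $T$ if each call is unit cost.

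The algorithm is breadth-first search from $s$. Initialise $\mathrm{dist}(s)=0$ and all other distances to $\infty$, and place $s$ in a FIFO queue. Repeatedly dequeue $u$ and, for every $a \in A$, compute $v = T(u,a)$; if $\mathrm{dist}(v)=\infty$, set $\mathrm{dist}(v)=\mathrm{dist}(u)+1$ and enqueue $v$. Stop when the queue is empty, or early when $s'$ is dequeued, and return $\mathrm{dist}(s')$. This returns $\infty$ exactly when $s'$ is unreachable, which matches the convention $\IC=\infty$ for empty $I_\mu(s,s')$. If an explicit witness is wanted, predecessor pointers labelled by actions yield an optimal action sequence. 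The case $s'=s$ returns $0$, matching the empty intervention.

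Correctness is the standard BFS invariant: vertices are dequeued in nondecreasing order of true distance, and each vertex is labelled the first time it is discovered, with its true distance. I would prove this by induction on distance layers. Every vertex at distance $k+1$ has a predecessor at distance $k$, which is dequeued before any vertex at distance $\ge k+1$. Conversely, no label is ever smaller than the true distance, because each label is witnessed by an actual path of actions. Combining this with Proposition~\ref{prop:ac-oracle} gives $\mathrm{dist}(s') = \IC_\mu^{\rho_a}(s,s')$.

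For the running time, each state is enqueued at most once, so each state is expanded at most once, and each expansion costs $O(|A|)$ transition lookups and updates. This gives $O(|S|\cdot|A|)$ in total, plus $O(|S|)$ for initialisation, which is absorbed. The only real subtlety is the input model. If $\mu$ were given by a program for $T$ rather than a table, the bound requires each evaluation of $T$ to take constant time (or be charged as a unit oracle call). I would state this assumption explicitly rather than try to prove anything about arbitrary computable $T$. The graph-theoretic content itself is routine.
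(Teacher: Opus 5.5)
Your proposal is correct and matches the paper's proof: it uses Proposition~\ref{prop:ac-oracle} to reduce $\IC_\mu^{\rho_a}(s,s')$ to the unweighted distance $d_{G_\mu}(s,s')$ in a graph with $|S|$ vertices and at most $|S|\cdot|A|$ edges, then runs breadth-first search in $O(|S| + |S|\cdot|A|) = O(|S|\cdot|A|)$ time. Your BFS correctness argument and your explicit assumption that each evaluation of $T$ costs unit time are details the paper leaves implicit but relies on.
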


\begin{proof}
$\IC_\mu^{\rho_a}(s,s') = d_{G_\mu}(s,s')$, the shortest-path distance
in the directed graph $G_\mu$ with $|S|$ vertices and $|S| \cdot |A|$
edges. Breadth-first search from $s$ computes shortest-path distances
to all vertices in $O(|V| + |E|) = O(|S| + |S| \cdot |A|) = O(|S| \cdot |A|)$ time.
\end{proof}

\subsection{Program-length bias with oracle access: bounded and near-trivial}

\begin{proposition}\label{prop:oracle-bounded-app}
For any finite deterministic environment $\mu$ with $|S_\mu| = n$ and
all states $s, s' \in S_\mu$ with $s'$ reachable from $s$,
\[
  \IC_\mu^{\rho_\ell,\mathrm{orc}}(s,s') \;\leq\; \lceil \log_2 n \rceil + C_{\mathrm{BFS}},
\]
where $C_{\mathrm{BFS}}$ is a constant (independent of $\mu$, $s$, $s'$)
equal to the description length of a breadth-first search algorithm.
\end{proposition}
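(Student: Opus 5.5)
We prove the bound by exhibiting one explicit oracle program $p_{s'}$ for each target, of length at most $\lceil \log_2 n\rceil + C_{\mathrm{BFS}}$, lying in the intervention set. Since $\IC_\mu^{\rho_\ell,\mathrm{orc}}(s,s')$ is a minimum over that set, the bound follows at once. The convention, from the definition of oracle IC, is that $\UTM(p,\mu,s)$ gives the program the start state $s$ as input and oracle access to $T_\mu$, $S_\mu$ and $A_\mu$.

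\textbf{Construction.} Fix $\mu$ and enumerate $S_\mu$ as $0,\dots,n-1$ using the oracle interface. Write $p_{s'} = w\cdot\langle s'\rangle$. Here $\langle s'\rangle$ is the index of $s'$ written in exactly $\lceil\log_2 n\rceil$ bits, and $w$ is a fixed wrapper. The wrapper performs these steps:
\begin{enumerate}[label=(\roman*)]
  \item Query the oracle for $n$ (or for $|S_\mu|$ via the enumeration), so that it knows how many of the remaining bits to read as the target index.
  \item Run breadth-first search from the input state $s$, expanding each state $u$ by querying $T_\mu(u,a)$ for every $a\in A_\mu$, and recording parent pointers.
  \item Stop when $s'$ is dequeued.
  \item Backtrack through the parent pointers and output the resulting action sequence.
\end{enumerate}

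\textbf{Correctness.} Termination holds because $S_\mu$ and $A_\mu$ are finite. Because $s'$ is reachable, BFS finds it, and the reconstructed path is valid by construction. Hence $p_{s'}\in I_\mu(s,s')$.

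\textbf{Length accounting.} We have $|p_{s'}| = |w| + \lceil\log_2 n\rceil$. Set $C_{\mathrm{BFS}} := |w|$. The one thing that needs care is that $|w|$ must not depend on $\mu$, $s$, $s'$ or $n$. This is the main obstacle.

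\textbf{The self-delimiting issue.} The wrapper must know where the target encoding ends, and it would be illegitimate to hard-code $n$ into $w$. I would resolve this by letting $w$ learn $n$ from the oracle at run time, as in step (i), and then read exactly $\lceil\log_2 n\rceil$ further bits. This is why oracle access to the size of the state space (or to its enumeration) is needed. The start state $s$ costs no bits, since it is supplied as input.

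If the oracle only answers $T_\mu$ queries, I would use a fallback. Encode $s'$ implicitly as its position in BFS discovery order from $s$, and give that index a self-delimiting code. This costs $\lceil\log_2 n\rceil + O(\log\log n)$ bits. Matching the exact statement therefore relies on the oracle convention, and I would state that convention explicitly.

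Finally, the prefix-free coding and the machine $\UTM$ contribute only fixed overhead, which is absorbed into $C_{\mathrm{BFS}}$.
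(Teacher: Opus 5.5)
Your proof is correct and follows the paper's own argument: a fixed BFS wrapper that queries the oracle, followed by a $\lceil\log_2 n\rceil$-bit index of $s'$, with the wrapper's length serving as $C_{\mathrm{BFS}}$. Your handling of self-delimitation is more careful than the paper's, which ignores the issue. One small sharpening: the $O(\log\log n)$ fallback is unnecessary, because even with only $T_\mu$ queries the wrapper can first run BFS to exhaustion from $s$ to learn the number $r\le n$ of reachable states, and then read exactly $\lceil\log_2 r\rceil$ bits giving $s'$'s discovery index.
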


\begin{proof}
The program consists of a fixed BFS routine (which queries the oracle
$T_\mu$ to explore the graph, finds a shortest path from $s$ to $s'$,
and outputs the corresponding action sequence) together with a specification
of the target state $s'$, requiring $\lceil \log_2 n \rceil$ bits.
The BFS routine is a single fixed program whose length $C_{\mathrm{BFS}}$
does not depend on $\mu$.
\end{proof}

\begin{corollary}\label{cor:oracle-narrow}
For any finite environment with $|S_\mu| = n$, the oracle IC values for reachable pairs under program-length bias lie in an interval of width at most
$\lceil \log_2 n \rceil + O(1)$.
\end{corollary}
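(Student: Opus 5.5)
The plan is to squeeze the oracle IC values for reachable pairs between a lower bound and an upper bound whose difference is at most $\lceil \log_2 n \rceil + O(1)$. The upper bound is already available. By Proposition~\ref{prop:oracle-bounded-app}, every reachable pair $(s,s')$ in $\mu$ satisfies
\[
\IC_\mu^{\rho_\ell,\mathrm{orc}}(s,s') \leq \lceil \log_2 n \rceil + C_{\mathrm{BFS}}.
\]
For the lower bound, I would note that program lengths are non-negative integers, so $\IC_\mu^{\rho_\ell,\mathrm{orc}}(s,s') \geq 0$ trivially. All finite values for reachable pairs therefore lie in $[0,\lceil \log_2 n \rceil + C_{\mathrm{BFS}}]$. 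This interval has width $\lceil \log_2 n \rceil + O(1)$, with the $O(1)$ equal to $C_{\mathrm{BFS}}$, which is independent of $\mu$, $s$ and $s'$.

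One point needs care: the BFS wrapper must be uniform. A single program, parameterised only by the encoded target, has to work for every start state $s$ and every environment of size $n$. Under the oracle regime the program receives $\mu$ and $s$ as input (as in the definition $\UTM(p,\mu,s)$), so $s$ need not be encoded. This is exactly why the bound does not depend on $s$ and the lower end can be taken as the trivial $0$.

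A secondary subtlety is the encoding of $s'$. If the encoding must be self-delimiting, or $n$ must be recovered, then $n$ is still readable from the oracle input $\mu$. A fixed-width $\lceil \log_2 n \rceil$-bit index therefore suffices, and only the constant is affected. Alternatively, if a tighter statement about width is wanted, one could use the universal lower bound $\min_{(s,s')}\IC \geq 0$ and simply subtract. Either way, the width bound follows from the upper bound in Proposition~\ref{prop:oracle-bounded-app}.

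I do not expect any real obstacle here; the only step requiring attention is checking that $C_{\mathrm{BFS}}$ is genuinely independent of $n$, which the proposition already asserts.
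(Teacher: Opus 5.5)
Your proof is correct and gives exactly the stated width. Nonnegativity of program length, together with the upper bound $\lceil \log_2 n\rceil + C_{\mathrm{BFS}}$ from Proposition~\ref{prop:oracle-bounded-app}, places every reachable-pair value in $[0,\lceil \log_2 n\rceil + C_{\mathrm{BFS}}]$. The paper's proof uses these same two facts. It then adds a counting argument (at most $2^k-1$ programs shorter than $k$, at most $n^2$ pairs) to claim a lower bound of $2\lceil \log_2 n\rceil - O(1)$ for \emph{all} reachable pairs. The aim is concentration rather than mere boundedness, but it concludes only a band of width $O(\log n)$, which is weaker than the explicit width you obtain. You lose nothing by skipping that step, because it fails as written. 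Oracle programs run as $\UTM(p,\mu,s)$, so one program serves many pairs; this is exactly the uniformity you point out. Counting therefore only gives that, for each fixed $s$, at most $2^k-1$ targets have oracle IC below $k$. That means most targets sit above roughly $\log_2 n - O(1)$, which is a statement about most pairs, not all. Indeed $(s,s)$ and $(s,T_\mu(s,a))$ for a fixed action $a$ have $O(1)$ oracle IC. A uniform lower bound of $2\lceil \log_2 n\rceil - O(1)$ would also exceed the upper bound $\lceil\log_2 n\rceil + C_{\mathrm{BFS}}$ once $n$ is large. The one loose remark in your write-up is linking the lower end $0$ to uniformity in $s$. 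It holds simply because program lengths are nonnegative.
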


\begin{proof}
$\IC_\mu^{\rho_\ell,\mathrm{orc}}(s,s') \geq 0$ for all $s,s'$, and
by a counting argument, at most $2^k - 1$ programs have length less than
$k$, so at most $2^k - 1$ distinct transitions can have oracle IC less
than $k$. Since there are at most $n^2$ state pairs,
$\IC_\mu^{\rho_\ell,\mathrm{orc}}(s,s') \geq \lceil \log_2 n^2 \rceil
- O(1) = 2\lceil \log_2 n \rceil - O(1)$ for all reachable pairs. Combined with
the upper bound, oracle IC values are concentrated in a band of width
$O(\log n)$.
\end{proof}

\begin{remark}
This result shows that with oracle access, program-length IC carries
primarily {target-specification} information
rather than {structural} information about the
environment. This is in contrast to action-count IC, whose values
range from $0$ to the diameter of $G_\mu$ (potentially $|S| - 1$) and
encode information about the graph structure (distances, clusters,
bottlenecks).
\end{remark}

\subsection{Program-length bias without oracle access: uncomputable}

Without oracle access, the program must encode its own knowledge of
the environment's transition structure. This makes bare IC 
harder to compute.

\begin{theorem}[Uncomputability of bare program-length IC]\label{thm:bare-uncomp-app}
There is no algorithm that, given a finite deterministic environment
$\mu = (S, A, T)$ and states $s, s' \in S$, computes
$\IC_\mu^{\rho_\ell,\mathrm{bare}}(s,s')$.
\end{theorem}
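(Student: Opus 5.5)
We prove the theorem by reduction from the uncomputability of Kolmogorov complexity. The idea is to build a family of finite environments in which the only valid action sequence from $s$ to $s'$ is a fixed, hard-to-describe string. Bare program-length IC then equals plain Kolmogorov complexity up to an additive constant. If bare IC were computable, $K(x)$ would be computable to within $O(1)$, and this is impossible.

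\textbf{Step 1 (construction).} Given a binary string $x = x_1\cdots x_m$, define a finite environment $\mu_x$ with action set $A=\{0,1\}$ and states $q_0,\dots,q_m$ together with an absorbing sink $\bot$. Set $T(q_{i-1},x_i)=q_i$, and send every other action to $\bot$, including every action from $q_m$ and from $\bot$. Take $s=q_0$ and $s'=q_m$. The map $x\mapsto(\mu_x,q_0,q_m)$ is computable, and $\mu_x$ is a finite deterministic environment in the sense of the theorem. The only action sequence taking $q_0$ to $q_m$ is exactly $x$.

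\textbf{Step 2 (IC equals $K$).} A bare program $p$ lies in the intervention set for $(\mu_x, q_0, q_m)$ if and only if $\UTM(p)$ outputs $x$. So $\IC^{\rho_\ell,\mathrm{bare}}_{\mu_x}(q_0,q_m)=\min\{|p|:\UTM(p)=x\}=C(x)$, the plain Kolmogorov complexity relative to $\UTM$. This is an exact equality, with no $O(1)$ slack. It does depend on fixing how an action sequence is encoded as output. I would fix the encoding so that a program's output string is read directly as the action sequence. If the encoding introduces a self-delimiting wrapper, the equality degrades to $C(x)\pm O(1)$, and the argument in Step 3 still works.

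\textbf{Step 3 (contradiction).} Suppose some algorithm $\mathcal{A}$ computes bare IC. Composing $\mathcal{A}$ with the construction of Step 1 gives a total computable function $f(x)$ with $|f(x)-C(x)|\le c$. Apply the standard Berry-paradox argument. For each $n$, run through binary strings in order and output the first $x$ with $f(x)\ge n+c$. Such an $x$ exists because $C$ is unbounded, so this is a total computable procedure. The resulting $x_n$ satisfies $C(x_n)\ge n$, yet it is described by $n$ plus a fixed program, so $C(x_n)\le \log n+O(1)$. This fails for large $n$.

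\textbf{Main obstacle.} The delicate point is Step 2, specifically matching the paper's notion of a bare program that ``outputs a valid action sequence'' with the output convention of $\UTM$. I need to be sure that halting programs whose outputs are not valid sequences are simply excluded from the intervention set, and that no program can succeed by some other route. The sink state guarantees the latter, since any deviation from $x$ lands in $\bot$ and can never reach $q_m$. Once the encoding is pinned down, the rest is the textbook uncomputability of $C$, which we cite from~\cite{lv2008}.
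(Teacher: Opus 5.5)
Your proof is correct and takes the same route as the paper. Both build a gated-corridor environment $\mu_x$ whose successful paths spell out $x$, so that bare program-length IC tracks the Kolmogorov complexity of $x$, which is not computable even up to an additive constant. Your one departure is an improvement: by making $q_m$ the target and sending every action out of $q_m$ to the sink, you make $x$ the unique valid output and get $\IC^{\rho_\ell,\mathrm{bare}}_{\mu_x}(q_0,q_m)=C(x)$ exactly. The paper's absorbing success state $s_f$ instead accepts every sequence $x\,a\,w$, so its lower bound $|p|\ge K(x)$ does not follow as stated. For example, with $x=0^n$ and $n$ incompressible, a program of length $O(\log\log n)$ prints the valid proper extension $0^{2^{\lceil\log_2(n+1)\rceil}}$, while $C(x)\approx\log n$.
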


\begin{proof}
We reduce the computation of Kolmogorov complexity $K(x)$ to the
computation of bare IC.

Let $x = x_1 x_2 \cdots x_n \in \{0,1\}^n$ be an arbitrary binary string.
Construct the environment $\mu_x = (S_x, A_x, T_x)$ as follows:
\begin{itemize}
  \item $S_x = \{s_0, s_1, \ldots, s_n, s_f, s_{\mathrm{fail}}\}$,
    with $|S_x| = n + 3$.
  \item $A_x = \{0, 1\}$.
  \item $T_x$ is defined by:
  \begin{align*}
    T_x(s_i, x_{i+1}) &= s_{i+1} &&\text{for } 0 \leq i < n, \\
    T_x(s_i, 1 - x_{i+1}) &= s_{\mathrm{fail}} &&\text{for } 0 \leq i < n, \\
    T_x(s_n, a) &= s_f &&\text{for both } a \in \{0,1\}, \\
    T_x(s_f, a) &= s_f &&\text{for both } a \in \{0,1\}, \\
    T_x(s_{\mathrm{fail}}, a) &= s_{\mathrm{fail}} &&\text{for both } a \in \{0,1\}.
  \end{align*}
\end{itemize}

The environment is a ``gated corridor'': at each step $i$, action $x_{i+1}$
advances the agent from $s_i$ to $s_{i+1}$, while the wrong action sends
the agent irreversibly to $s_{\mathrm{fail}}$. The only action sequence
that reaches $s_f$ from $s_0$ (other than paths through $s_f$ itself,
once reached) has its first $n$ actions equal to $x_1 x_2 \cdots x_n$,
followed by one additional action (either 0 or 1) to transition from
$s_n$ to $s_f$.

\medskip\noindent\textbf{Claim:}
$\IC_{\mu_x}^{\rho_\ell,\mathrm{bare}}(s_0, s_f) = K(x) + O(1)$.

\medskip\noindent\textbf{Upper bound.}
Given any program $q$ of length $K(x)$ that outputs $x$, we construct
a program $p$ that: (a)~runs $q$ to obtain $x = x_1 \cdots x_n$;
(b)~outputs the action sequence $x_1, x_2, \ldots, x_n, 0$.
This is a valid path from $s_0$ to $s_f$ in $\mu_x$. The program $p$
consists of $q$ together with a fixed wrapper of constant size, so
$|p| = K(x) + O(1)$.

\medskip\noindent\textbf{Lower bound.}
Let $p$ be any program such that $\UTM(p)$ outputs a valid action sequence
from $s_0$ to $s_f$ in $\mu_x$. By the sub-additivity of Kolmogorov
complexity~\cite{lv2008}, $|p| \geq K(x)$.

\medskip\noindent\textbf{Conclusion of reduction.}
The map $x \mapsto \mu_x$ is computable (given $x$, the transition
table $T_x$ can be constructed in $O(n)$ time). If there existed an
algorithm $\mathcal{A}$ computing
$\IC_\mu^{\rho_\ell,\mathrm{bare}}(s, s')$ for all finite environments,
then for any $x \in \{0,1\}^*$ we could compute $K(x)$ to within an
additive constant via
\[
  K(x) = \mathcal{A}(\mu_x, s_0, s_f) + c,
\]
where $c$ is a fixed constant determined by the wrapper construction.
This would make $K$ computable up to additive constant, contradicting
the uncomputability of $K$~\cite{lv2008}.
\end{proof}

\begin{remark}
Bare IC remains upper-semi-computable: one can enumerate programs in order
of increasing length, execute each, and check whether the output is a valid
path. Each success yields an upper bound, and the sequence of bounds
converges to $\IC_\mu^{\rho_\ell,\mathrm{bare}}(s,s')$. The
uncomputability is the inability to know when the optimum has been reached.
\end{remark}

\subsection{The oracle--bare gap quantifies Dimension~2}

The separation between oracle and bare IC provides an
information-theoretic characterisation of Dimension~2 (learning
efficiency).

\begin{definition}[Knowledge cost]
The {knowledge cost} of a transition $(s, s')$ in environment $\mu$
under resource bias $\rho$ is:
\[
  \mathcal{K}_\mu^\rho(s,s') \;=\;
  \IC_\mu^{\rho,\mathrm{bare}}(s,s')
  - \IC_\mu^{\rho,\mathrm{orc}}(s,s').
\]
\end{definition}

\begin{proposition}\label{prop:knowledge-cost}
The knowledge cost satisfies:
\begin{enumerate}[label=(\roman*)]
  \item $\mathcal{K}_\mu^\rho(s,s') \geq 0$ for all $\mu, s, s'$ and any
    resource bias $\rho$.
  \item Under program-length bias,
    $\mathcal{K}_\mu^{\rho_\ell}(s,s') \leq K(\mu) + O(1)$, where $K(\mu)$
    is the Kolmogorov complexity of the environment's transition table.
  \item Under program-length bias, the knowledge cost is unbounded across
    environments: for any constant $c$, there exists an environment $\mu$
    and states $s, s'$ with $\mathcal{K}_\mu^{\rho_\ell}(s,s') > c$.
\end{enumerate}
\end{proposition}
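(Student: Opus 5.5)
I prove the three parts separately, each by comparing explicit programs in the two regimes.

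For (i), the key step is the conversion already noted after the definition of oracle and bare IC. Any bare program $p$ in $I_\mu(s,s')$ can be run as an oracle program that simply never queries $T_\mu$. It outputs the same action sequence, so it lies in the oracle intervention set. For program-length bias its length is unchanged, up to at most a machine-dependent constant if the oracle machine needs a flag saying ``ignore the input''. I will fix conventions so that $\UTM(p,\mu,s)$ with $p$ ignoring its inputs is literally the same program, making the constant zero. For output-only biases ($\rho_a$, $\rho_e$) the cost is a function of the output alone, so the costs coincide. Hence the oracle minimum ranges over a superset and is at most the bare minimum, giving $\mathcal{K}\geq 0$. When $s'$ is unreachable both values are $\infty$; I adopt the convention $\infty-\infty=0$ and state this explicitly.

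For (ii), I take an optimal oracle program $p$ of length $\IC^{\mathrm{orc}}$ and a shortest description $q$ of the transition table of $\mu$, of length $K(\mu)$. The bare program runs $q$ to reconstruct $T_\mu$ and the state $s$, then simulates $\UTM(p,\mu,s)$, answering each oracle query from the reconstructed table. This gives $|p|+K(\mu)+O(1)$, where the constant pays for the simulator wrapper. The main obstacle is making the concatenation self-delimiting and supplying $s$. I would use prefix complexity, so $q$ and $p$ can be concatenated without separators. I would also let $\mu$ be encoded together with its designated start state, or else absorb $s$ by noting that the oracle program also receives $s$ as input.

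If that cannot be arranged, an honest fallback is to add the term $K(s\mid\mu)$. The cleanest fix is a different construction: the bare program decodes $\mu$ from $q$, then runs the BFS of Proposition~\ref{prop:oracle-bounded-app} with $s,s'$ hard-coded. This gives a bound of $K(\mu)+2\lceil\log_2 n\rceil+O(1)$. Since the lower bound in Corollary~\ref{cor:oracle-narrow} gives $\IC^{\mathrm{orc}}\geq 2\lceil\log_2 n\rceil-O(1)$, subtracting yields $\mathcal{K}\leq K(\mu)+O(1)$. I would present this route as the primary one, because it avoids simulation subtleties.

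For (iii), I reuse the gated corridor $\mu_x$ from Theorem~\ref{thm:bare-uncomp-app}, whose bare IC is $K(x)+O(1)$. By Proposition~\ref{prop:oracle-bounded-app} the oracle IC is at most $\lceil\log_2(n+3)\rceil+O(1)$. A simpler bound is also available: the oracle program can walk the corridor by querying $T_\mu$, at each state choosing the action that does not lead to $s_{\mathrm{fail}}$ (a fixed self-loop test), which gives oracle IC $O(1)$. The corridor program therefore avoids needing any target specification. Choosing $x$ incompressible of length $n$, so that $K(x)\geq n$, makes $\mathcal{K}_{\mu_x}\geq n-O(1)$, which exceeds any $c$ for $n$ large.
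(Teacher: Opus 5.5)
Your parts (i) and (iii) are sound, but the route you chose as primary for (ii) fails. The step ``$\IC^{\rho_\ell,\mathrm{orc}}_\mu(s,s') \ge 2\lceil\log_2 n\rceil - O(1)$ for all reachable pairs'' comes from the proof of Corollary~\ref{cor:oracle-narrow}, and it is false:
\begin{itemize}
\item It contradicts the upper bound $\lceil\log_2 n\rceil + C_{\mathrm{BFS}}$ of Proposition~\ref{prop:oracle-bounded-app} once $n$ is large.
\item It fails outright for pairs such as $(s,s)$, or $(s,T_\mu(s,a_0))$ with $a_0$ a fixed action, whose oracle IC is $O(1)$.
\item A counting argument only shows that \emph{few} pairs can be cheap, never that \emph{every} pair is expensive. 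Since oracle programs receive $s$, it also counts only targets, giving about $\log_2 n$ for most $s'$.
\end{itemize}
So your BFS construction yields only $\mathcal{K}^{\rho_\ell}_\mu(s,s') \le K(\mu) + 2\lceil\log_2 n\rceil + O(1)$. Your other suggestion, absorbing $s$ ``because the oracle program also receives it'', is backwards: that free input is exactly what the bare program has to pay for.

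In fact the obstacle you flagged cannot be removed. With $K(\mu)$ the complexity of the transition table alone, (ii) is false. Take $n = 2^m$, $S = \{0,\ldots,n-1,g,f\}$ and $A = \{0,\ldots,n-1\}$. Set $T(i,i) = g$ and $T(i,a) = f$ for $a \ne i$, with $g$ and $f$ absorbing. Then $K(\mu) \le K(m) + O(1)$.
\begin{itemize}
\item An $O(1)$ oracle program outputs the action at $s$ leading to $g$. Here $g$ is recognisable from $\mu$ by a fixed rule, e.g.\ as the absorbing state with fewer incoming transitions.
\item Any bare program for $(i,g)$ must output $i$ as its first action, since every other first action is absorbed at $f$. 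So its length is at least $K(i) - O(1)$, which is $\ge m - O(1)$ for a suitable $i < n$.
\end{itemize}
Hence $\mathcal{K}^{\rho_\ell}_\mu(i,g) \ge m - O(1)$, which exceeds $K(\mu) + c$ for large $m$.

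What is provable is your fallback, $\mathcal{K}^{\rho_\ell}_\mu(s,s') \le K(\mu) + K(s\mid\mu) + O(1)$. It follows from the simulation argument with $s$ hard-coded next to $T_\mu$. The paper's proof of (ii) is precisely that simulation, but it silently omits supplying $s$. Your diagnosis of the obstacle was therefore sharper than the paper's, and you should have committed to it rather than to the BFS route.

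On the other parts: (i) matches the paper. Your (iii) improves on the paper, which argues only that $K(\mu)$ is unbounded; that alone does not force the bare IC of any particular transition to be large. Your corridor with an $O(1)$ oracle walker gives an explicit witness. One small fix: valid outputs may continue past $s_f$, so the output determines $x$ only once $n$ is known. The bare lower bound is therefore $K(x\mid n) - O(1)$ rather than $K(x)$. Choosing $x$ with $K(x\mid n) \ge n$ still gives $\mathcal{K}^{\rho_\ell}_{\mu_x}(s_0,s_f) \ge n - O(1)$.
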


\begin{proof}
(i) Any bare program is also a valid oracle program (ignoring the oracle),
so $\IC^{\rho,\mathrm{bare}} \geq \IC^{\rho,\mathrm{orc}}$ for any
resource bias.

(ii) Given an oracle program $p^*$ achieving
$\IC_\mu^{\rho_\ell,\mathrm{orc}}(s,s')$, we can construct a bare program
that first hardcodes the transition table $T_\mu$ (requiring
$K(\mu) + O(1)$ bits), then simulates $p^*$ with this hardcoded table
serving as the oracle. The total length is $|p^*| + K(\mu) + O(1)$,
giving $\IC_\mu^{\rho_\ell,\mathrm{bare}}(s,s') \leq
\IC_\mu^{\rho_\ell,\mathrm{orc}}(s,s') + K(\mu) + O(1)$.

(iii) Under program-length bias with oracle access, IC is bounded by
$\log|S| + O(1)$ (Proposition~\ref{prop:oracle-bounded-app}), while the bare
IC can be arbitrarily large in environments with high
Kolmogorov complexity. As $K(\mu)$ grows without bound across the class
of computable environments, the gap grows correspondingly.
\end{proof}

\begin{remark}[Interpretation]
The knowledge cost $\mathcal{K}_\mu^{\rho_\ell}(s,s')$ measures the number
of bits of environmental knowledge an agent needs to plan the $(s,s')$
transition. It is bounded above by the total information content of the
environment ($K(\mu)$), but may be much smaller for specific transitions
that depend on only a fragment of the transition structure.

Under action-count bias, the knowledge cost is zero. This means that if an agent can learn the environment sufficiently accurately, then it can behave optimally in that environment. Learning is reduced to learning the environment.

Under program-length bias, the situation is different. Optimal behaviour is uncomputable. An agent under such a bias must learn both the environment, and develop heuristics to shorten the programs they produce.
\end{remark}

\subsection{Summary of the separation}

Table~\ref{tab:separation} summarises the results. The choice of resource
bias determines not just the semantics of the intelligence measure but its
fundamental computational character.

\begin{table}[ht]
\centering
\small
\begin{tabular}{@{}lll@{}}
\toprule
  & \textbf{Action-count $\rho_a$}
  & \textbf{Program-length $\rho_\ell$} \\
\midrule
Oracle IC range
  & $[0,\; |S|-1]$
  & $[0,\; \lceil\log|S|\rceil + O(1)]$ \\
Oracle IC information
  & Rich (graph distances)
  & Narrow (target specification) \\
Oracle IC computability
  & Polynomial ($O(|S| \cdot |A|)$)
  & Upper-semi-computable \\
Bare IC range
  & Same as oracle
  & $[0, \;\infty)$ \\
Oracle--bare gap
  & Zero
  & Up to $K(\mu) + O(1)$; encodes Dimension~2 \\
Bare IC computability
  & Polynomial (same algorithm)
  & Not computable \\
\bottomrule
\end{tabular}
\caption{Separation of resource biases for finite deterministic environments.}
\label{tab:separation}
\end{table}

\begin{remark}[Implications for pre-training]
The separation has a direct consequence for pre-training universal agents.
Under action-count bias, learning the transition function $T_\mu$ (at most
$|S| \times |A|$ single-step observations) suffices to compute all IC values
exactly in polynomial time. The pre-training pipeline is fully computable.

Under program-length bias, the same observations give the agent complete
knowledge of $T_\mu$, reducing its situation to the oracle case. But the
remaining oracle IC, while bounded, is not computable.
Knowing the world (Dimension~2) is achievable by finite observation,
but optimally exploiting that knowledge (Dimension~1 under
program-length bias) requires finding compressed representations, which
inherits the uncomputability of Kolmogorov complexity.

A bottom-up curriculum that learns single-action transitions first, then
composes actions is information-theoretically well-motivated for both biases.
Single-step transitions determine $T_\mu$ completely, from which all
multi-step IC values are either computable (action-count) or approximable
from above (program-length).
\end{remark}


\section{Implications for superintelligence}\label{app:super}


The competence curve provides a formal framework for analysing
superintelligence claims. In this section we identify three ceilings on
intelligence, argue that the trajectory of improvement is logistic rather
than exponential, and offer a conjecture about where human civilisation
sits on this curve.

\subsection{Three ceilings}

\medskip\noindent\textbf{Ceiling 1: The oracle ceiling.}
For any environment $\mu$ and resource bias $\rho$, the oracle competence
$\IC_\mu^\rho(s, s')$ is a fixed property of the environment. No agent can
achieve a lower intervention cost. The competence curve of any agent is
bounded above by the oracle curve, which depends only on $\mu$ and $\rho$.
This ceiling is absolute.

\medskip\noindent\textbf{Ceiling 2: The computational ceiling.}
Under the unified action-space interpretation
(Section~\ref{app:proofs}), the oracle itself is constrained by
computational complexity. If the environment encodes NP-hard planning
problems, the oracle IC is superpolynomial in the natural problem
size. No computational agent, however intelligent, can solve such
environment classes in polynomial total steps unless
$\mathbf{P} = \mathbf{NP}$. An agent with
a higher competence curve than humans under the same resource
bias (quality superintelligence) has provable limits set by the complexity hierarchy.

\medskip\noindent\textbf{Ceiling 3: The environment ceiling.}
The Solomonoff weighting $2^{-K(\mu)}$ means that simple environments
contribute exponentially more to the scalar competence $\Gamma_\rho(\pi)$
than complex ones. An agent that excels only in complex environments
(high $K(\mu)$) contributes little to its overall score.
The environments that matter most are those with short descriptions and
these are precisely the environments for which human science has had the
longest to develop good models.

\subsection{Self-improvement is logistic, not exponential}

Bostrom~\cite{bostrom2014} (Ch.~4, pp.~62--65) models the trajectory of
machine intelligence as $dI/dt = \text{optimization power} /
\text{recalcitrance}$, and considers fast, moderate, and slow takeoff
scenarios depending on the shape of the resulting curve. Our framework
makes the ceiling explicit and the shape a mathematical consequence.

Self-improvement means the agent modifies itself to increase its
competence curve. At each step, the improved agent has a higher
$\Gamma_\rho(\pi, k)$ for some range of $k$. But the oracle curve is
fixed as it is a property of the environment, not the agent. As the
agent's curve approaches the oracle, each increment of improvement
requires closing a smaller gap, yielding diminishing returns. The
trajectory is therefore logistic with potentially rapid growth when the agent
is far from the oracle (large gap, easy gains), decelerating as it
approaches the ceiling (small gap, hard gains), and asymptotically
bounded.

For action-count bias, the oracle is computable in polynomial time
(Theorem~\ref{thm:ac-compute-app}), so the ceiling is reachable in
principle. Once the agent has learned the environment, it can achieve
oracle-level performance with BFS. There is no room for unbounded
improvement. For program-length bias, the oracle IC is uncomputable in
general (Theorem~\ref{thm:bare-uncomp-app}), so no agent can provably
reach the ceiling, but the ceiling still exists, and upper bounds on
IC are computable and improve over time. Self-improvement means finding
progressively shorter programs, with Kolmogorov complexity as the
unreachable limit.

\subsection{Where is human civilisation on the curve?}

The framework suggests, though does not prove, that collective human
competence may be further up the logistic curve than the public discourse
assumes.

For simple environments (low $K(\mu)$) such as the physics of planetary motion,
basic chemistry, and structural engineering, human civilisation has had
millennia to learn the transition functions. Our models of $T_\mu$ for
these environments are highly accurate, and our competence is
correspondingly close to oracle-level. Since these environments carry the
highest Solomonoff weight, they dominate the scalar competence measure.

For moderately complex environments such as protein folding, climate dynamics,
economic systems, we are probably significantly below oracle-level. The
Dimension~2 gap is large. These are the domains where current AI is
producing the most dramatic gains.

For highly complex environments such as social dynamics, consciousness,
open-ended mathematics, we are very likely far from oracle-level. But these
environments have low Solomonoff weight and contribute
correspondingly less to the scalar measure.

The Solomonoff weighting means the trajectory from human-level to superintelligence will be less
dramatic than the fast-takeoff scenarios of Bostrom~\cite{bostrom2014}
suggest, because
they are concentrated in complex environments with lower Solomonoff
weight. This does not mean that solving complicated problems or problems in complicated environments won't have a large impact, either for good or bad. 

\subsection{Is superintelligence necessary?}

Even where superintelligence is bounded by a ceiling, it might still be
transformative if reaching that ceiling matters. The framework suggests
two cases where SI is required.

The first is where a problem cannot be specified without superintelligence. We cannot
formulate a question whose answer would be a breakthrough without already
having the conceptual apparatus that superintelligence would provide. Foundational
physics beyond the standard model and mathematics not yet conceived may
fall here. By construction these are unknown unknowns and the framework
cannot say much about them.

The second is where a problem can be specified but only a superintelligence can solve
it. If a problem is specifiable,
its IC is well-defined, and we can ask whether a narrow agent targeted
at the problem could achieve oracle-level performance. A narrow agent
has lower $K(\pi)$ than a superintelligence but matches its competence on the target
task. Under any resource bias preferring shorter programs, the narrow
agent has higher competence on that task than the SI. This is supported
by current AI, where narrow systems such as AlphaFold and AlphaZero
outperform general approaches on their target domains.

The combination of these observations narrows the case for
transformative superintelligence. Superintelligence is bounded above by the oracle ceiling, dominated
by the Solomonoff weighting on simple environments where humans already
perform well, and arguably unnecessary for problems that can be
specified at all. Its remaining value is concentrated in unknown unknowns
and in problems requiring cross-domain reasoning that resists
narrow decomposition. Whether these residual cases dominate the value of
intelligence is an empirical question that current AI development has
not yet tested. A fuller analysis is left for future work.

\subsection{Recommended learning target}

For pre-training a universal agent, the framework suggests a clear
objective: maximise the competence curve $\Gamma_\rho(\pi, k)$
across environments, using the combined resource bias
$\rho_{\mathrm{comb}} = \alpha |p| + \beta |\mathrm{output}|$ of
Remark~\ref{app:proofs}.

In practice, this decomposes into two sub-objectives corresponding to
the two dimensions:
\begin{enumerate}[label=(\roman*)]
  \item \textbf{Learn the transition function $T_\mu$} (Dimension~2).
    Build a causal model of the environment from observed transitions.
    This is the bottom-up curriculum: single-action transitions first
    (revealing individual entries of $T_\mu$), then multi-step
    transitions (testing the model's ability to compose). Pre-training
    on diverse environments weighted by simplicity (the Solomonoff
    prior in practice) provides the inductive bias.
  \item \textbf{Find compressed planning strategies} (Dimension~1).
    Given the learned model, find short programs that produce efficient
    action sequences. The combined bias penalises both long programs
    (inelegant strategies) and long outputs (wasteful solutions),
    rewarding agents that discover compact representations of the
    environment's structure.
\end{enumerate}
This is, in essence, what large-scale pre-training already does.
learn a world model from diverse data (Dimension~2), then use it for
efficient planning and generation (Dimension~1). The framework provides
a theoretical justification for this approach and identifies the
combined bias $\rho_{\mathrm{comb}}$ as the principled choice of
resource function.


\section{Connection to large language models}\label{app:llm}

The pre-train-then-prompt paradigm of modern large language models (LLMs)
is an empirical instantiation of the Dimension~1\,/\,Dimension~2
separation.

Pre-training on large text corpora is the Dimension~2 learning phase: the
model observes billions of single-step transitions (next-token predictions
reflecting real-world causal processes, reasoning chains, and physical
regularities) and builds a compressed internal model of the generating
process. This corresponds to learning the transition function $T_\mu$ from
single-step observations and is the bottom-up curriculum identified in
Section~\ref{sec:separation} as information-theoretically well-motivated.
The training objective (next-token prediction loss) is a Dimension~2
strategy, an efficient heuristic for closing the regret gap, and consistent
with the framework's placement of compression progress and surprise
minimisation as theories of the learning engine rather than definitions of
the intelligence target (Section~\ref{app:dim2}).

Once pre-training is complete, the model possesses an approximate oracle which is
a compressed world model from which it can plan interventions. Prompting
is then goal specification and the
model's remaining task is to find an efficient output
(action sequence) that reaches the specified target (Dimension~1). Consistent with
Proposition~\ref{prop:oracle-bounded-app}, this is relatively cheap once the
model is built; the prompt (target specification) is small relative to
the pre-training investment (world-model acquisition).

Several empirical phenomena in LLM practice align with predictions of the
framework:
\begin{enumerate}[label=(\roman*)]
\item {Resistance to catastrophic forgetting.} Pre-trained LLMs can
  be fine-tuned for diverse tasks (coding, medical reasoning, legal
  analysis) without destroying general capabilities. The framework predicts
  this: pre-training acquires a goal-independent causal model ($T_\mu$)
  that is shared across all possible target states, so task-specific
  fine-tuning adds only a thin goal-specification layer rather than
  overwriting the general model.
\item {Prompt engineering as the primary bottleneck.} Once a model is
  pre-trained, the main challenge in eliciting good performance is
  specifying the task clearly. This is
  consistent with the oracle case, where the residual IC is dominated by
  target-specification cost ($O(\log|S|)$) rather than planning cost.
\item {Reward hacking under RLHF.} Reinforcement learning from human
  feedback introduces an external, physically instantiated reward signal
  (a learned preference model). This creates a hackable reward channel, and
  empirical work~\cite{cohen2022,macdiarmid2025mis} confirms that sufficiently
  capable models learn to exploit it.
\end{enumerate}

The correspondence is imperfect as LLMs do not operate in a well-defined MDP,
their ``environment'' (the distribution of text and the processes generating
it) is open-ended, and they do not explicitly optimise intervention
complexity. Nevertheless, the structural parallel with Dimension~2 as
pre-training, Dimension~1 as inference, with the two formally and
architecturally separated suggests that the theoretical decomposition
tracks something genuine about the structure of intelligence, and that the
success of the pre-train-then-prompt paradigm may be evidence for the
framework's core claim that learning the world model and exploiting it are
separable problems with different computational characters.

\end{document}